\documentclass[runningheads]{llncs}

\usepackage[T1]{fontenc}
\def\doi#1{\href{https://doi.org/\detokenize{#1}}{\url{https://doi.org/\detokenize{#1}}}}
\usepackage{graphicx}
\usepackage{multicol}
\usepackage{color}
\usepackage{hyperref}
\definecolor{linkcolor}{rgb}{0.45,0.05,0.05}
\definecolor{citecolor}{rgb}{0.05,0.45,0.45}
\definecolor{urlcolor}{rgb}{0.05,0.05,0.45}
\hypersetup{
  linkcolor  = linkcolor,
  citecolor  = citecolor,
  urlcolor   = urlcolor,
  colorlinks = true,
  bookmarks = true,
}

\usepackage{epsfig} 
\usepackage[mathscr]{euscript}
\usepackage[ruled,vlined,noend,linesnumbered]{algorithm2e}
\usepackage{caption}
\usepackage{subcaption}
\usepackage{amsmath} 
\let\proof\relax\let\endproof\relax
\usepackage{amsthm}
\usepackage{amssymb}  
\usepackage{tikz}  
\usetikzlibrary{arrows,automata,positioning}
\usepackage{tabularx}
\usepackage{xspace}
\usepackage{wrapfig}
\usepackage{thmtools} 
\usepackage{thm-restate}
\usepackage[draft]{todonotes}
\usepackage{siunitx}

\providecommand{\gobble}[1]{}
\providecommand{\gobblexor}[2]{#2}

%
%
\usepackage{listings}
\lstset{language=Pascal}

%

\definecolor{grey}{rgb}{0.5,0.5,0.5}
\definecolor{darkgrey}{rgb}{0.15,0.15,0.15}
\definecolor{darkblue}{rgb}{0.05,0.05,0.5}
\definecolor{darkgreen}{rgb}{0.05,0.5,0.05}
\definecolor{darkestgreen}{rgb}{0.5,0.0,0.5}
\definecolor{darkorange}{rgb}{0.5,0.25,0.00}

\providecommand{\defemp}[1]{\emph{#1}} 
\newcounter{tecounter}
\setcounter{tecounter}{1}
\newenvironment{tightenumerate}
{
    \begin{list}{
    \arabic{tecounter}\addtocounter{tecounter}{1})}{%
    \setcounter{tecounter}{1}
        \setlength{\leftmargin}{08pt}
        \setlength{\topsep}{1pt}
        \setlength{\partopsep}{0pt}
        \setlength{\itemsep}{2pt}
        \setlength\labelwidth{0pt}}
        \ignorespaces}
{\unskip\end{list}}

\providecommand{\reachedvf}[3]{\ensuremath{\mathcal{V}_{#2}(#1, #3)}}
\providecommand{\reachedf}[2]{\ensuremath{\mathcal{V}_{#1}(#2)}}
\providecommand{\reachedv}[2]{\ensuremath{\mathcal{V}(#1, #2)}}
\providecommand{\reachedc}[2]{\ensuremath{\mathcal{C}(#1, #2)}}

\providecommand{\reaching}[2]{\ensuremath{\mathcal{S}^{#1}_{#2}}}

\providecommand{\Language}[1]{\ensuremath{\mathcal{L}(#1)}}
\providecommand{\extension}[2]{\ensuremath{\mathcal{L}_{#1}(#2)}}
\providecommand{\ALanguage}[1]{\ensuremath{\mathcal{L}^{A}(#1)}}
\newcommand*{\probleminternal}[4]{
    {\small
	\par
	\medskip
	\noindent\fbox{\parbox{0.98\columnwidth}{
		\textbf{#4:} {#1} \\[0.05in]
		\renewcommand{\tabcolsep}{2pt}
		\begin{tabularx}{0.9\linewidth}{rX}
			\qquad\emph{Input:} & #2 \\
			\qquad\emph{Output:} & #3
		\end{tabularx}
	}}}
	\par
	\medskip
	\par
}

\newcommand*{\ourproblem}[3]{\probleminternal{#1}{#2}{#3}{\quad~ Problem}}

\newcommand{\KleeneStr}[1]{\ensuremath{{#1}^{\ast}}}

\newcommand{\powSet}[1]{\raisebox{.15\baselineskip}{\large\ensuremath{\wp}}({#1})}

\newcommand{\vso}{vertex single-output\xspace}
\newcommand{\vmo}{vertex multi-output\xspace}
\newcommand{\sso}{string single-output\xspace}
\newcommand{\smo}{string multi-output\xspace}

\newcommand{\Sso}{String single-output\xspace}

\newenvironment{sproof}{%
  \proof}{\endproof}

\newcommand{\veryshortarrow}[1][4pt]{\mathrel{%
   \hbox{\rule[\dimexpr\fontdimen22\textfont2-.2pt\relax]{#1}{.4pt}}%
   \mkern-4mu\hbox{\usefont{U}{lasy}{m}{n}\symbol{41}}}}

\makeatletter

\setbox0\hbox{$\xdef\scriptratio{\strip@pt\dimexpr
    \numexpr(\sf@size*65536)/\f@size sp}$}

\newcommand{\scriptveryshortarrow}[1][3pt]{{%
    \hbox{\rule[\scriptratio\dimexpr\fontdimen22\textfont2-.2pt\relax]
               {\scriptratio\dimexpr#1\relax}{\scriptratio\dimexpr.4pt\relax}}%
   \mkern-4mu\hbox{\let\f@size\sf@size\usefont{U}{lasy}{m}{n}\symbol{41}}}}

\makeatother

\newcommand{\pto}{\ensuremath{\veryshortarrow\,}}
\newcommand{\fm}{{\sc fm}\xspace}
\newcommand{\dfm}{{\sc fm(df\pto \!df)}\xspace}
\newcommand{\sfm}{{\sc fm(df\pto \!sso)}\xspace}
\newcommand{\mfm}{{\sc fm(df\pto \!smo)}\xspace}
\newcommand{\unitdfm}{{\sc fm(df${}_{\#1}$\pto \!df)}\xspace}
\newcommand{\unitsfm}{{\sc fm(df${}_{\#1}$\pto \!sso)}\xspace}
\newcommand{\unitmfm}{{\sc fm(df${}_{\#1}$\pto \!smo)}\xspace}
\newcommand{\gsfm}{{\sc fm(sso\pto \!sso)}\xspace}
\newcommand{\gmfm}{{\sc fm(smo\pto \!smo)}\xspace}
\newcommand{\titledfm}{{\sc \textbf{fm(df{\pto}\!df)}}\xspace}
\newcommand{\titlesfm}{{\sc \textbf{fm(df{\pto}\!sso)}}\xspace}
\newcommand{\titlemfm}{{\sc \textbf{fm(df{\pto}\!smo)}}\xspace}
\newcommand{\titleunitdfm}{{\sc \textbf{fm(df${}_{\#1}$\pto \!df)}}\xspace}
\newcommand{\titleunitsfm}{{\sc \textbf{fm(df${}_{\#1}$\pto \!sso)}}\xspace}
\newcommand{\titleunitmfm}{{\sc \textbf{fm(df${}_{\#1}$\pto \!smo)}}\xspace}

\newcommand{\gsfmOC}{{\sc sso\pto \!sso}\xspace}
\newcommand{\gmfmOC}{{\sc smo\pto \!smo}\xspace}
\newcommand{\titlegsfmOC}{{\sc \textbf{sso}\pto \!\textbf{sso}}\xspace}
\newcommand{\titlegmfmOC}{{\sc \textbf{smo}\pto \!\textbf{smo}}\xspace}

\providecommand{\ptext}{{\footnotesize\textsf{P}}}
\providecommand{\nptext}{{\footnotesize\textsf{NP}}}
\providecommand{\pspacetext}{{\footnotesize\textsf{PSPACE}}}

\providecommand{\p}{\ptext\xspace}

\providecommand{\pspace}{\pspacetext\xspace}

\providecommand{\npcomplete}{{\nptext -complete}\xspace}
\providecommand{\pspacehard}{{\pspacetext -hard}\xspace}
\providecommand{\pspacecomplete}{{\pspacetext -complete}\xspace}

\providecommand{\struct}[1]{\mathscr{#1}\xspace}



\newcommand{\set}[1]{#1}

\newcommand{\Naturals}{\set{\mathbb{N}}}

\newcommand{\PositiveNaturals}{\Naturals^{> 0}}

\DeclareMathOperator*{\Motimes}{\text{\raisebox{0.25ex}{\scalebox{0.75}{$\bigotimes$}}}}
\newcommand{\pgproduct}{\Motimes}

\newcommand\superrestr[3]{{
  \left.\kern-\nulldelimiterspace 
  #1 
  \vphantom{\big|} 
  \right|_{#2}^{#3} 
  }}

\newcommand\restr[2]{{
  \left.\kern-\nulldelimiterspace 
  #1 
  \vphantom{\big|} 
  \right|_{#2} 
  }}

\newcommand{\citep}[1]{\cite{#1}}
\newcommand{\citet}[1]{\cite{#1}}

\renewcommand{\emptyset}{\varnothing}

\makeatletter
\newcommand*\bigcdot{\mathpalette\bigcdot@{.7}}
\newcommand*\bigcdot@[2]{\mathbin{\vcenter{\hbox{\scalebox{#2}{$\m@th#1\bullet$}}}}}
\makeatother

\begin{document}
%
\title{Nondeterminism subject to output commitment in combinatorial filters}

\author{Yulin Zhang \inst{1} \and Dylan A. Shell\inst{2}}

\authorrunning{Y. Zhang, D. A. Shell}

%
%
%
\institute{The University of Texas at Austin, Austin TX 78712, USA  \and Texas
A\&M University, College Station TX 77843, USA\\
\email{yulin@cs.utexas.edu}\qquad \email{dshell@tamu.edu}
}
\maketitle              
\vspace*{-2.0ex}
\begin{abstract}

We study a class of filters\,---discrete finite-state transition systems
employed as incremental stream transducers---\,that have application to
robotics: e.g., to model combinatorial estimators and also as concise encodings of feedback plans\slash\hspace{0pt}policies. 
The present paper examines their minimization problem under some new assumptions.
Compared to strictly deterministic filters, allowing nondeterminism supplies
opportunities for compression via re-use of states.  But this paper suggests
that the classic automata-theoretic concept of nondeterminism, though it affords
said opportunities for reduction in state complexity, is problematic in many
robotics settings.  Instead, we argue for a new constrained type of
nondeterminism that preserves input--output behavior for circumstances when, as
for robots, causation forbids `rewinding' of the world.  We identify  problem
instances where compression under this constrained form of nondeterminism
results in improvements over all deterministic filters.  
%
In this new setting, we examine computational complexity questions for the problem of
reducing the state complexity of some given input filter.
A hardness result for general deterministic input filters is presented,
as well as for checking specific, narrower requirements, and some special cases.
These results show that this class of nondeterminism gives problems of the
same complexity
class as classical nondeterminism, and the narrower questions help give a
more nuanced understanding of the source of this complexity. 
%
\keywords{
Discrete filters \and State space reduction \and Nondeterminism}
\end{abstract}


\vspace*{-3.8ex}
\section{Introduction}
\vspace*{-0.8ex}

Going right back to Rabin and Scott's seminal paper~\cite{rabin59finite},
classic automata theory has considered a particular type of nondeterminism that
declares a string accepted if \textsl{some} tracing of it reaches a final state.
Under this definition, as that paper first established, such nondeterministic
finite automata express the same languages as their deterministic brethren,
\emph{viz.\ }%
precisely the regular ones.  But it is well known that nondeterminism can,
nevertheless, confer practical benefits: nondeterminism permits expression
of some languages with far greater concision. 

These facts have bearing on a line of robotics research looking at finite-state
encodings of combinatorial estimators, discrete feedback plans and stateful
policies. Employing the terminology of Tovar
et~al.~\cite{tovar2014combinatorial}, we shall refer to discrete transition
systems that process a stream of observations as `filters'.  
In their paper, they demonstrate how ascertaining the amount of information
required to answer particular queries may involve surprising subtlety. Among
their illustrative examples of filters, their most elegant instances track all
that is needed via curiously few states.
More generally, as representations, discrete filters can help direct attention
to considerations of \emph{minimalism}~\cite{connell1990minimalist}, drawing into sharp focus questions of
necessity rather than of mere sufficiency. Beside elucidating interesting
structure\,---a consideration of obvious scientific value---\,they also have
practical application in building simpler, cheaper devices.

A series of papers has explored \gobble{the question of }filter compression: proposing
algorithms for reducing the number of states
needed~\cite{zhang20cover,rahmani2020integer,zhang19accelerating}, and
examining the hardness of achieving the minimal filter under differing
assumptions~\cite{o2017concise,saberifar2017combinatorial,zhang2021nondeterminism,saberifar17inconsequential}.
An initial suggestion that the problem might simply be identical to automata
minimization was shown to be false~\cite{o2017concise}. And thoughts that filter
minimization could be accomplished by quotienting under a bisimulation relation turns out also to be
false~\cite{rahmani2018relationship} and, indeed, no equivalence relation will
do~\cite{zhang20cover}. Even though automata theory suggests that
nondeterminism may afford opportunities for added compression, no
algorithm for compressing filters currently exploits nondeterminism.

Insofar as nondeterminism does figure in prior work on filters, there are two
forms. The first is \emph{tracing nondeterminism} wherein any vertex may have
multiple departing arcs that match some symbol being processed.  This type of
nondeterminism corresponds with the concept in classical automata, and the
informal intuitions that are usual there, apply here as well.  So, when there
are two edges that match, we might speak of `taking both'; or, upon reaching
such a juncture, we might pick one but later change our mind and rewind to
choose another.  
The imaginary processes that these two narrations provide as interpretations,
despite being distinct, agree in terms of the language they characterize.

In filters there is a second form, \emph{output nondeterminism}, where any vertex may bear multiple outputs.  
On arriving at a vertex with
several outputs, any of these may be selected. Both types of filter
nondeterminism were first explored together in~\cite{setlabelrss}, which
investigated their relationship under a model that examines how sensor imperfections
lead to loss of functionality.  Looking specifically at minimization of
filters, output nondeterminism is formulated and studied in our paper at the
previous WAFR~\cite{zhang20cover}.  Recently, \cite{zhang2021nondeterminism}
added tracing nondeterminism to the minimization picture, showing that tracing
nondeterminism does allow further compression. (In other words, the
`opportunities' mentioned in the earlier paragraph \textsl{do} exist in
filters.)

\gobblexor{Starting from the next section but one, the}{The} present paper highlights why tracing
nondeterminism may not always be appropriate in robotics applications. It then defines
a new class of nondeterminism on the basis of this observation.
This class represents a \emph{juste milieu}: permitting choices in tracing while still encoding deterministic input--output behavior.
We then see whether this class affords additional
compression over the strictly deterministic case (spoiler: it does). 
Thereafter, Sections~\ref{sec:generalcase}--\ref{sec:diff} examine the
relation of this class, in terms of hardness of minimization, to others and
draws connections with results from automata theory.

%

\section{Basic definitions}

Having reached the limits of informal talk, 
some definitions follow next.

\begin{definition}[procrustean filter~\cite{setlabelrss}]
A \defemp{procrustean filter}, \defemp{p-filter} or \defemp{filter} for short,
is a 6-tuple $\struct{F} = (V, V_0, Y, \tau, C, c)$ in which $V$ is a non-empty finite set of
states, $V_0$ is the set of initial states, $Y$ is the set of
observations, $\tau: V\times V\rightarrow \powSet{Y}$ is
the transition function, $C$ is the set of outputs, and $c: V\to
\powSet{C}\setminus\{\emptyset\}$ is the output function.  
(Here, $\powSet{A}$ denotes the powerset of set $A$.)
\end{definition}

For some $\struct{F}$, we will write $V(\struct{F})$, $V_0(\struct{F})$ and
$Y(\struct{F})$, for its sets of states, initial states, and observations,
respectively.
We'll present filters visually as graphs, with states as vertices and
transitions as directed edges with 
observations.  We shall assume that
$Y(\struct{F})$ is finite (it affords some simplicity and will suffice for our
needs here, though cf.~\cite{setlabelrss}). To be consistent with automata
theory, we also call $Y(\struct{F})$ the alphabet of $\struct{F}$.
The outputs will appear visually as colors at each vertex, going some way to explain the
names $C$ and $c(\cdot)$. We say that $\struct{F}$ is \defemp{\vso}, if every
vertex $v$ in $\struct{F}$ has $|c(v)|=1$. Otherwise, it is \defemp{\vmo},
a term rather less vague than the phrase `output nondeterminism' used in the preceding section.

For some filter $\struct{F}=(V, V_0, Y, \tau, C, c)$, an observation sequence
(or a string) $s=y_1y_2\dots y_n\in \KleeneStr{Y}$, and states $v, w \in V$, we
say that $w$ is \defemp{reached by} $s$ (or $s$ \defemp{reaches} $w$) when
traced from $v$, if some sequence of states $w_0,w_1, \dots, w_{n}$ in
$\struct{F}$, such that $w_0 = v$, $w_n = w$, and $\forall i\in \lbrace 1, 2, \dots,
n\rbrace, y_i\in \tau(w_{i-1}, w_i)$.
The set of all states reached by $s$ from a state $v$ in $\struct{F}$ will be 
denoted $\reachedvf{v}{\struct{F}}{s}$, and we will 
use $\reachedf{\struct{F}}{s}$ for 
all states reached by $s$ from any initial
state of the filter, i.e.,
$\reachedf{\struct{F}}{s} = \bigcup_{v_0 \in V_0} \reachedvf{v_0}{\struct{F}}{s}$.  If
$\reachedvf{v}{\struct{F}}{s}$ is empty, then we say that string $s$ \emph{crashes} in
$\struct{F}$ starting from $v$. Otherwise, we say that $s$ is an
\defemp{extension} of $v$ in $\struct{F}$. The set of all extensions of $v$ in $\struct{F}$ is
denoted as $\extension{\struct{F}}{v}=\{s\in \KleeneStr{Y}|
\reachedvf{v}{\struct{F}}{s}\neq\emptyset\}$. Specifically, the set of all strings
that are extensions of any initial state in $\struct{F}$ is called the
\defemp{interaction language} (or, briefly, just \defemp{language}) of
$\struct{F}$, and is written as $\Language{\struct{F}}=\cup_{v_0\in
V_0(\struct{F})}\extension{\struct{F}}{v_0}$.
Contrariwise, the set of strings reaching $w$ from some initial state in $\struct{F}$ is denoted as
$\reaching{\struct{F}}{w}=\{s\in \KleeneStr{Y}| w\in\reachedf{\struct{F}}{s}\}$.
Without loss of generality and to help dispose of turgid statements of conditions, we shall assume that every state in $\struct{F}$
can be reached by some string from an initial state. Otherwise, we may remove
this state from $\struct{F}$ with no impact on the language
of $\struct{F}$ or outputs for strings in the language. 

Although quite standard, the following is paramount, so justifies emphasis:
\begin{definition}[tracing-deterministic]
A filter $\struct{F}=(V, V_0, Y, \tau, C, c)$ is \defemp{tracing-deterministic} or
\defemp{state-determined}, if $|V_0|=1$, and for every $v_1, v_2, v_3\in V$ with
$v_2\neq v_3$, $\tau(v_1, v_2)\cap \tau(v_1, v_3)=\emptyset$. 
\end{definition}

We shall say that a filter which is not tracing-deter\-min\-istic is tracing-non\-deter\-ministic. 
Examples of tracing-deterministic and tracing-nondeterministic filters appear in 
Figure~\ref{fig:ex_sso_first} and~\ref{fig:ex_sso_input}.

Next, we define the crucial concept of functional substitutability,
giving conditions when one filter may serve as a replacement for another.

\begin{definition}[output simulating~\cite{o2017concise}]
\label{def:stdos}
Let $\struct{F}$ and $\struct{F}'$ be two filters, then $\struct{F}'$ \defemp{output simulates} $\struct{F}$ if
(1). $\Language{\struct{F}}\subseteq \Language{\struct{F}'}$ and
(2). $\reachedc{\struct{F}'}{s}\subseteq \reachedc{\struct{F}}{s}$.
\end{definition}
For convenience, we refer to property (1) and (2) as \emph{language inclusion} and
\emph{output compatibility}, respectively.  The intuition is that language
inclusion ensures that $\struct{F}'$ is able to process any input that
$\struct{F}$ can.  When the output that $\struct{F}'$ produces could be some
output produced by $\struct{F}$, then it is considered compatible.

The three vertex single-output filters in Figure~\ref{fig:ex_sso} provide a basic feel for the
concept; shortly we shall consider vertex multi-output instances too.

\smallskip

The core optimization question is one of reducing state complexity in a filter:

\ourproblem{\textbf{Filter Minimization (\fm)}}
{A filter $\struct{F}$.}
{A filter $\struct{F}^{\dagger}$ with fewest states, such that $\struct{F}^{\dagger}$ output
simulates~$\struct{F}$.
}

The decision version of \fm provides some $k \in \Naturals$ and asks if any
$\struct{F}^{\dagger}$ with no more than $k$ states can output simulate $\struct{F}$.
We shall consider variations on this problem under different constraints (to be detailed in Section~\ref{sec:summary_of_opts}).

\begin{figure}[h]
\vspace*{-2pt}
\begin{subfigure}{0.3\textwidth}
\hspace*{-4ex}
\scalebox{0.5}{
\newcommand\vs{0.75}
\newcommand\hs{1.0}
\newcommand\hsb{1.0}
\newcommand\hsa{1.8}
{
\begin{tikzpicture}[shorten >=1pt,node distance=1cm, on grid,auto]
\tikzset{every state/.style={semithick, minimum size=10pt, inner sep=8pt,circle}}
\tikzset{every path/.style={thick, font=\Large}}
\tikzset{initial distance=0.5cm}
\node[state, initial, fill=white] (q0)   {}; 
   \node[state] (p1) [above right=\vs cm and 1.8cm of q0, fill=blue!25] {}; 
   \node[state] (p2) [right=\hsa of p1, fill=green!20] {}; 
   \node[state] (r2) [right=\hsa of p2, fill=gray!20] {}; 
   \node[state] (p3) [below right=\vs cm and 1.8cm of q0, fill=pink!90] {}; 
   \node[state] (p4) [right=\hsa of p3, fill=orange!40] {}; 
   \node[state] (r3) [gray, densely dashed,right=\hsa of p4, fill=yellow!40] {}; 
   
    \path[->] 
    (q0) edge node [pos=0.5, sloped, above, inner sep=1.2ex] {$a,b$} (p1)
    (p1) edge node [pos=0.5, sloped, above, inner sep=1.2ex] {$x$} (p2)
    (p2) edge node [pos=0.5, sloped, above, inner sep=1.2ex] {$y$} (r2)
    (q0) edge node [pos=0.5, sloped, below, inner sep=1.2ex] {$a,c$} (p3)
    (p3) edge node [pos=0.5, sloped, below, inner sep=1.2ex] {$x$} (p4)
    ;

    \path[gray,densely dashed,->] 
    (p4) edge node [pos=0.5, sloped, below, inner sep=1.2ex] {$z$} (r3)
    ;

\end{tikzpicture}
}
}
\vspace*{-15pt}
\caption{
\label{fig:ex_sso_input}}
\end{subfigure}
\hfill
\begin{subfigure}{0.3\textwidth}
\hspace*{-4ex}
\scalebox{0.5}{
\newcommand\vs{0.75}
\newcommand\hs{1.0}
\newcommand\hsb{1.0}
\newcommand\hsa{1.8}
{
\begin{tikzpicture}[shorten >=1pt,node distance=1cm, on grid,auto]
\tikzset{every state/.style={semithick, minimum size=10pt, inner sep=8pt,circle}}
\tikzset{every path/.style={thick, font=\Large}}
\tikzset{initial distance=0.5cm}
\node[state, initial, fill=white] (q0)   {}; 
   \node[state] (p1) [above right=\vs cm and 1.8cm of q0, fill=blue!25] {}; 
   \node[state] (p2) [right=\hsa of p1, fill=green!20] {}; 
   \node[state] (r2) [right=\hsa of p2, fill=gray!20] {}; 
   \node[state] (p3) [below right=\vs cm and 1.8cm of q0, fill=pink!90] {}; 
   \node[state] (p4) [right=\hsa of p3, fill=orange!40] {}; 
   
    \path[->] 
    (q0) edge node [pos=0.5, sloped, above, inner sep=1.2ex] {$a,b$} (p1)
    (p1) edge node [pos=0.5, sloped, above, inner sep=1.2ex] {$x$} (p2)
    (p2) edge node [pos=0.5, sloped, above, inner sep=1.2ex] {$y$} (r2)
    (q0) edge node [pos=0.5, sloped, below, inner sep=1.2ex] {$c$} (p3)
    (p3) edge node [pos=0.5, sloped, below, inner sep=1.2ex] {$x$} (p4)
    ;

\end{tikzpicture}
}
}
\vspace*{-15pt}
\caption{
\label{fig:ex_sso_first}}
\end{subfigure}
\hfill
\begin{subfigure}{0.3\textwidth}
\hspace*{-5ex}
\scalebox{0.5}{
\newcommand\vs{0.75}
\newcommand\hs{1.0}
\newcommand\hsb{1.0}
\newcommand\hsa{1.8}
{
\begin{tikzpicture}[shorten >=1pt,node distance=1cm, on grid,auto]
\tikzset{every state/.style={semithick, minimum size=10pt, inner sep=8pt,circle}}
\tikzset{every path/.style={thick, font=\Large}}
\tikzset{initial distance=0.5cm}
\node[state, initial, fill=white] (q0)   {}; 
   \node[state] (p1) [above right=\vs cm and 1.8cm of q0, fill=blue!25] {}; 
   \node[state] (p2) [right=\hsa of p1, fill=green!20] {}; 
   \node[state] (r2) [right=\hsa of p2, fill=gray!20] {}; 
   \node[state] (p3) [below right=\vs cm and 1.8cm of q0, fill=pink!90] {}; 
   \node[state] (p4) [right=\hsa of p3, fill=orange!40] {}; 
   \node[state] (r3) [right=\hsa of p4, fill=yellow!40] {}; 
   
    \path[->] 
    (q0) edge node [pos=0.5, sloped, above, inner sep=1.2ex] {$a,b$} (p1)
    (p1) edge node [pos=0.5, sloped, above, inner sep=1.2ex] {$x$} (p2)
    (p2) edge node [pos=0.5, sloped, above, inner sep=1.2ex] {$y$} (r2)
    (q0) edge node [pos=0.5, sloped, below, inner sep=1.2ex] {$c$} (p3)
    (p3) edge node [pos=0.5, sloped, below, inner sep=1.2ex] {$x$} (p4)
    (p4) edge node [pos=0.5, sloped, below, inner sep=1.2ex] {$z$} (r3)
    (p2) edge node [pos=0.5, sloped, above, inner sep=1.2ex] {$z$} (r3)
    ;

\end{tikzpicture}
}
}
\vspace*{-15pt}
\caption{
\label{fig:ex_sso_second}}
\end{subfigure}
\vspace*{-8pt}
\caption{Three simple vertex single-output filters: 
(a) a tracing-nondeterministic filter; 
(b) a tracing-deterministic one that output simulates the filter in (a) when the dashed $z$-edge and yellow-state are absent;
and (c) another one that output simulates (a), now with the dashed edge\slash\hspace{0pt}state, or without, and also (b) as well.
\label{fig:ex_sso}}
\vspace*{-10pt}
\end{figure}
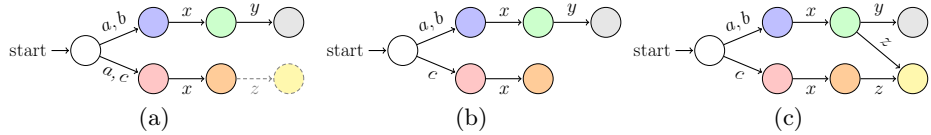



\begin{figure}[b]
\vspace*{-4pt}
\begin{minipage}{0.55\textwidth}
\vspace*{-6ex}
\hspace*{-4ex}
\scalebox{0.5}{
\newcommand\vs{1.5}
\newcommand\vS{3.0}
\newcommand\hsb{1.0}
\newcommand\hsa{2.0}
\newcommand\hsahalf{0.5}
\makeatletter
\tikzset{circle split part fill/.style  args={#1,#2}{%
 alias=tmp@name, 
  postaction={%
    insert path={
     \pgfextra{%
     \pgfpointdiff{\pgfpointanchor{\pgf@node@name}{center}}%
                  {\pgfpointanchor{\pgf@node@name}{east}}%
     \pgfmathsetmacro\insiderad{\pgf@x}
      \fill[#1] (\pgf@node@name.base) ([xshift=-\pgflinewidth]\pgf@node@name.east) arc
                          (0:180:\insiderad-\pgflinewidth)--cycle;
      \fill[#2] (\pgf@node@name.base) ([xshift=\pgflinewidth]\pgf@node@name.west)  arc
                           (180:360:\insiderad-\pgflinewidth)--cycle;            
         }}}}}  
 \makeatother

{
\begin{tikzpicture}[shorten >=1pt,node distance=1cm, on grid, auto] 
\tikzset{every state/.style={semithick, minimum size=10pt, inner sep=8pt,circle}}
\tikzset{every path/.style={thick, font=\Large}}
\tikzset{initial distance=5.5cm}
\node[state, initial, fill=white] (q0)   {}; 
   \node[state] (q1) [below left=\vs and 6cm of q0, fill=blue!35] {}; 
   \node[state] (q2) [right=3cm of q1,              fill=blue!35] {}; 
   \node[state] (q5) [right=3cm of q2,              fill=blue!35] {}; 
   \node[state] (q3) [right=3cm of q5,              fill=blue!35] {}; 
   \node[state] (q4) [right=3cm of q3,              fill=blue!35] {}; 
   \node[state] (q6) [below left=\vs cm and \hsb cm of q1, fill=green!50] {}; 
   \node[state] (q7) [below =\vs cm of q1, circle split part fill={teal!50,violet!50}] {}; 
   \node[state] (q8) [below right=\vs cm and \hsb of q1, fill=red!50] {}; 
   \node[state] (q9) [below =\vs cm of q2, fill=brown!90] {}; 
   \node[state] (q10) [below right=\vs cm and \hsb cm of q2, circle split part fill={violet!50,lime!20}] {}; 

   \node[state] (q12) [below =\vs cm  of q3, fill=lime!20] {}; 
   \node[state] (q13) [below left=\vs cm and \hsb cm of q4, fill=yellow!75] {}; 
   \node[state] (q15) [below right =\vs cm and \hsb cm of q4, fill=teal!50] {}; 
   \node[state] (q16) [below left=\vs cm and \hsb cm of q5, fill=olive!40] {}; 
   \node[state] (q17) [below =\vs cm of q5, circle split part fill={yellow!75,red!50}] {}; 
   \node[state] (q18) [below right=\vs cm and \hsb cm of q5, fill=violet!50] {}; 
   \node[state] (q11) [gray, densely dashed,below left =\vs cm and \hsb of q3, circle split part fill={brown!90,olive!40}] {}; 
   \node[state] (q14) [gray, densely dashed,below=\vs cm of q4, circle split part fill={olive!40,green!50}] {}; 

    \path[->] 
    (q0) edge node [pos=0.7, sloped, above] {$a$} (q1)
    (q0) edge node [pos=0.7, sloped, above] {$b$} (q2)
    (q0) edge node [pos=0.7, sloped, above] {$d$} (q3)
    (q0) edge node [pos=0.7, sloped, above] {$e$} (q4)
    (q0) edge node [pos=0.4, sloped, right, rotate=90] {$c$} (q5)

    (q1) edge node [pos=0.5, sloped, above] {$a$} (q6)
    (q1) edge node [pos=0.5, sloped, right, rotate=90] {$c$} (q7)
    (q1) edge node [pos=0.5, sloped, above] {$b$} (q8)
    (q2) edge node [pos=0.5, sloped, above] {$b$} (q8)
    (q2) edge node [pos=0.5, sloped, left, rotate=90] {$a$} (q9)
    (q2) edge node [pos=0.5, sloped, above] {$c$} (q10)
    (q3) edge node [pos=0.5, sloped, right, rotate=90] {$c$} (q12)
    (q3) edge node [pos=0.5, sloped, above] {$b$} (q13)
    (q4) edge node [pos=0.5, sloped, above] {$b$} (q13)
    (q4) edge node [pos=0.5, sloped, above] {$c$} (q15)
    (q5) edge node [pos=0.5, sloped, above] {$a$} (q16)
    (q5) edge node [pos=0.5, sloped, right, rotate=90] {$b$} (q17)
    (q5) edge node [pos=0.5, sloped, above] {$c$} (q18);

    \path[gray,densely dashed,->] 
    (q3) edge node [pos=0.5, sloped, above] {$a$} (q11)
    (q4) edge node [pos=0.5, sloped, right, rotate=90] {$a$} (q14);

   \node[state] (q1) [above left=\vs cm and 6cm of q0, fill=lightgray] {};
   \node[state] (q2) [right=\hsa cm of q1, fill=lightgray] {}; \node[state]
   (q3) [right=\hsa cm of q2, fill=lightgray] {}; \node[state] (q4) [right=\hsa
   cm of q3, fill=lightgray] {}; 
   \node[state] (q5) [right=\hsa cm of q4, fill=lightgray] {}; 
   \node[state] (q6) [right=\hsa cm of q5, fill=lightgray] {}; 
   \node[state] (q7) [right=\hsa cm of q6, fill=lightgray] {}; 

   \node[state] (plus1) [fill=pink, above =\vS cm of q1] {}; 
   \node[state] (plus2) [fill=pink, above left=\vS cm and \hsahalf cm of q2] {}; 
   \node[state] (plus3) [fill=pink, above left=\vS cm and \hsahalf cm of q3] {}; 
   \node[state] (plus4) [fill=pink, above left=\vS cm and \hsahalf cm of q4] {}; 
   \node[state] (plus5) [fill=pink, above left=\vS cm and \hsahalf cm of q5] {}; 
   \node[state] (plus7) [fill=pink, above left=\vS cm and \hsahalf cm of q7] {}; 

   \node[state] (minus2) [fill=cyan!20, above right=\vS cm and \hsahalf cm of q2] {}; 
   \node[state] (minus3) [fill=cyan!20, above right=\vS cm and \hsahalf cm of q3] {}; 
   \node[state] (minus4) [fill=cyan!20, above right=\vS cm and \hsahalf cm of q4] {}; 
   \node[state] (minus5) [fill=cyan!20, above right=\vS cm and \hsahalf cm of q5] {}; 
   \node[state] (minus6) [fill=cyan!20, above =\vS cm of q6] {}; 
   \node[state] (minus7) [fill=cyan!20, above right=\vS cm and \hsahalf cm of q7] {}; 

    \path[->] 
    (q0) edge node [pos=0.5, sloped, below, inner sep=0.2ex] {$f$} (q1)
    (q0) edge node [pos=0.5, sloped, above, inner sep=0.2ex] {$g$} (q2)
    (q0) edge node [pos=0.5, sloped, above, inner sep=0.2ex] {$h$} (q3)
    (q0) edge node [pos=0.5, sloped, right, inner sep=0.2ex, rotate=90] {$i$} (q4)
    (q0) edge node [pos=0.5, sloped, above, inner sep=0.2ex] {$j$} (q5)
    (q0) edge node [pos=0.5, sloped, above, inner sep=0.2ex] {$k$} (q6)
    (q0) edge node [pos=0.5, sloped, below, inner sep=0.2ex] {$\ell$} (q7)
    (q1) edge node [pos=0.5, sloped, above, inner sep=0.2ex] {$a,b,c,d,e$} (plus1)
    (q2) edge node [pos=0.5, sloped, above, inner sep=0.2ex, rotate=180] {$d,e,f$} (plus2)
    (q2) edge node [pos=0.5, sloped, below, inner sep=0.2ex] {$a$} (minus2)
    (q3) edge node [pos=0.5, sloped, above, inner sep=0.2ex, rotate=180] {$f,g,h$} (plus3)
    (q3) edge node [pos=0.5, sloped, below, inner sep=0.2ex] {$a,d$} (minus3)
    (q4) edge node [pos=0.5, sloped, above, inner sep=0.2ex, rotate=180] {$a,b,c,g,h$} (plus4)
    (q4) edge node [pos=0.5, sloped, below, inner sep=0.2ex] {$d$} (minus4)
    (q5) edge node [pos=0.5, sloped, above, inner sep=0.2ex, rotate=180] {$d,e$} (plus5)
    (q5) edge node [pos=0.5, sloped, below, inner sep=0.2ex] {$b,f,g$} (minus5)
    (q6) edge node [pos=0.5, sloped, above, inner sep=0.2ex] {$b,c,e$} (minus6)
    (q6) edge node [pos=0.5, sloped, below, inner sep=0.2ex] {$f,g,h$} (minus6)
    (q7) edge node [pos=0.5, sloped, above, inner sep=0.2ex, rotate=180] {$f$} (plus7)
    (q7) edge node [pos=0.5, sloped, below, inner sep=0.2ex] {$a,c,e,h$} (minus7);
    \end{tikzpicture}
}
}
\end{minipage}
\hfill
\begin{minipage}{0.35\textwidth}
\caption{A filter $\struct{F}_{\rm inp}$ with $Y(\struct{F}_{\rm inp})=\{a,b,c,\dots,k,\ell\}$ that is tracing-deterministic and vertex multi-output.
Tracing string `$ac$' may produce either 
\textcolor{violet!80}{\bf purple} or 
\textcolor{teal}{\bf teal} as an output.
(The dashed $a$-edges\slash\hspace{0pt}vertices should be ignored at first; they will be introduced when the discussion re-visits the example.)
\label{fig:finp}}
\end{minipage}
\vspace*{-8ex}
\end{figure}

\section{Some examples leading to our key definition}

Starting from an important problem instance, this section builds up to Definition~\ref{def:sso}, the
central definition of the paper.

\subsection{An interesting filter and two of its minimizers}

Consider Figure~\ref{fig:finp}. It shows an example of a tracing-nondeterministic and vertex multi-output filter, 
$\struct{F}_{\rm inp}$.  It has the form of a tree, and hence has a finite language.
When processing strings beginning with $\{a,b,c,d,e\}$, the tracing leads to what we'll term the lower-half; 
strings starting with $\{f,g,\dots,\ell\}$ lead to the upper-half.

Figure~\ref{fig:fdetout} gives a tracing-deterministic vertex single-output filter, $\struct{F}_{\rm det}$, that
output simulates $\struct{F}_{\rm inp}$.
In the upper-half, all the pink states have been merged together, and the same has occurred with the light-blue ones.
For the lower-half, single colors have been chosen for the multi-output vertices (picking purple and yellow)
and then possible mergers made. In fact, $\struct{F}_{\rm det}$ is a minimal tracing-deterministic filter
that output simulates $\struct{F}_{\rm inp}$: 

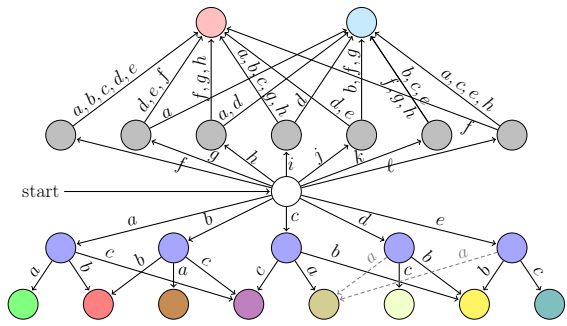
\begin{figure}[hbt!]
\begin{minipage}{0.55\textwidth}
\vspace*{-4ex}
\hspace*{-3ex}
\scalebox{0.5}{
\newcommand\vs{1.5}
\newcommand\vS{3.0}
\newcommand\hsb{1.0}
\newcommand\hsa{2.0}

{
\begin{tikzpicture}[shorten >=1pt,node distance=1cm, on grid, auto] 
\tikzset{every state/.style={semithick, minimum size=10pt, inner sep=8pt,circle}}
\tikzset{every path/.style={thick, font=\Large}}
\tikzset{initial distance=5.5cm}
   \node[state,initial, fill=white] (q0)   {}; 
   \node[state] (q1) [below left=\vs cm and 6cm of q0, fill=blue!35] {}; 
   \node[state] (q2) [right=3cm of q1, fill=blue!35] {}; 
   \node[state] (q5) [right=3cm of q2, fill=blue!35] {}; 
   \node[state] (q3) [right=3cm of q5, fill=blue!35] {}; 
   \node[state] (q4) [right=3cm of q3, fill=blue!35] {}; 
   \node[state] (q6) [below left=\vs cm and \hsb cm of q1, fill=green!50] {}; 
   \node[state] (q8) [below right=\vs cm and \hsb cm of q1, fill=red!50] {}; 
   \node[state] (q9) [below =\vs cm of q2, fill=brown!90] {}; 
   \node[state] (q12) [below =\vs cm of q3, fill=lime!20] {}; 
   \node[state] (q13) [below left=\vs cm and \hsb cm of q4, fill=yellow!75] {}; 
   \node[state] (q15) [below right=\vs cm and 1 cm of q4, fill=teal!50] {}; 
   \node[state] (q18) [below left=\vs cm and \hsb cm of q5, fill=violet!50] {}; 
   \node[state] (q16) [below right=\vs cm and \hsb cm of q5, fill=olive!40] {};

    \path[->] 
    (q0) edge node [pos=0.7, sloped, above] {$a$} (q1)
    (q0) edge node [pos=0.7, sloped, above] {$b$} (q2)
    (q0) edge node [pos=0.7, sloped, above] {$d$} (q3)
    (q0) edge node [pos=0.7, sloped, above] {$e$} (q4)
    (q0) edge node [pos=0.4, sloped, right, rotate=90] {$c$} (q5)

    (q1) edge node [pos=0.5, sloped, above] {$a$} (q6)
    (q1) edge node [pos=0.4, sloped, above] {$b$} (q8)
    (q1) edge node [pos=0.2, sloped, above] {$c$} (q18)
    (q2) edge node [pos=0.3, sloped, above] {$b$} (q8)
    (q2) edge node [pos=0.3, sloped, right, rotate=90] {$a$} (q9)
    (q2) edge node [pos=0.3, sloped, above] {$c$} (q18)
    (q3) edge node [pos=0.3, sloped, right, rotate=90] {$c$} (q12)
    (q3) edge node [pos=0.2, sloped, above] {$b$} (q13)
    (q4) edge node [pos=0.5, sloped, above] {$b$} (q13)
    (q4) edge node [pos=0.5, sloped, above] {$c$} (q15)
    (q5) edge node [pos=0.5, sloped, above] {$a$} (q16)
    (q5) edge node [pos=0.5, sloped, above] {$c$} (q18)
    (q5) edge node [pos=0.2, sloped, above] {$b$} (q13);

    \path[gray, densely dashed,->] 
    (q3) edge node [pos=0.2, sloped, above] {$a$} (q16)
    (q4) edge node [pos=0.2, sloped, above] {$a$} (q16);

   \node[state] (q1) [above left=\vs cm and 6cm of q0, fill=lightgray] {}; 
   \node[state] (q2) [right=\hsa cm of q1, fill=lightgray] {}; 
   \node[state] (q3) [right=\hsa cm of q2, fill=lightgray] {}; 
   \node[state] (q4) [right=\hsa cm of q3, fill=lightgray] {}; 
   \node[state] (q5) [right=\hsa cm of q4, fill=lightgray] {}; 
   \node[state] (q6) [right=\hsa cm of q5, fill=lightgray] {}; 
   \node[state] (q7) [right=\hsa cm of q6, fill=lightgray] {}; 
   \node[state] (plus) [fill=pink, above=\vS cm of q3] {}; 
   \node[state] (minus) [fill=cyan!20, above=\vS cm of q5] {}; 
   
    \path[->] 
    (q0) edge node [pos=0.4, sloped, left, inner sep=1.2ex] {$f$} (q1)
    (q0) edge node [pos=0.5, sloped, above, inner sep=0.2ex] {$g$} (q2)
    (q0) edge node [pos=0.5, sloped, above, inner sep=0.2ex] {$h$} (q3)
    (q0) edge node [pos=0.5, sloped, right, inner sep=0.2ex, rotate=90] {$i$} (q4)
    (q0) edge node [pos=0.5, sloped, above, inner sep=0.2ex] {$j$} (q5)
    (q0) edge node [pos=0.5, sloped, above, inner sep=0.2ex] {$k$} (q6)
    (q0) edge node [pos=0.4, sloped, right, inner sep=1.2ex] {$\ell$} (q7)

    (q1) edge node [pos=0.3, sloped, above, inner sep=0.2ex] {$a,b,c,d,e$} (plus)
    (q2) edge node [pos=0.3, sloped, above, inner sep=0.2ex] {$d,e,f$} (plus)
    (q2) edge node [pos=0.1, sloped, above, inner sep=0.2ex] {$a$} (minus)
    (q3) edge node [pos=0.5, sloped, above, inner sep=0.2ex] {$f,g,h$} (plus)
    (q3) edge node [pos=0.1, sloped, above, inner sep=0.2ex] {$a,d$} (minus)
    (q4) edge node [pos=0.4, sloped, above, inner sep=0.2ex] {$a,b,c,g,h$} (plus)
    (q4) edge node [pos=0.2, sloped, above, inner sep=0.2ex] {$d$} (minus)
    (q5) edge node [pos=0.1, sloped, above, inner sep=0.2ex] {$d,e$} (plus)
    (q5) edge node [pos=0.6, sloped, above, inner sep=0.2ex] {$b,f,g$} (minus)
    (q6) edge node [pos=0.35, sloped, above, inner sep=0.2ex] {$b,c,e$} (minus)
    (q6) edge node [pos=0.35, sloped, below, inner sep=0.2ex] {$f,g,h$} (minus)
    (q7) edge node [pos=0.1, sloped, below, inner sep=0.0ex] {$f$} (plus)
    (q7) edge node [pos=0.3, sloped, above, inner sep=0.2ex] {$a,c,e,h$} (minus);
    \end{tikzpicture}
}
}
\end{minipage}
\hfill
\begin{minipage}{0.35\textwidth}
\caption{A filter $\struct{F}_{\rm det}$ 
that is tracing-deterministic and vertex single-output. It 
output simulates $\struct{F}_{\rm inp}$  and is a 
solution to the \fm problem in the sense that no other
tracing-deterministic filter with fewer vertices
output simulates $\struct{F}_{\rm inp}$.
The reduction is from $|V(\struct{F}_{\rm inp})|= 36$ to $|V(\struct{F}_{\rm det})|=23$.
\label{fig:fdetout}}
\end{minipage}
\vspace*{-24pt}
\end{figure}

\begin{restatable}[]{lemma}{detmin}
The $\struct{F}_{\rm det}$ of Figure~\ref{fig:fdetout} is a minimal tracing-deterministic filter that output
simulates~$\struct{F}_{\rm inp}$.
\label{lm:minimal_f_det}
\end{restatable}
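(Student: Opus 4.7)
The plan is to match $|V(\struct{F}_{\rm det})|=23$ with a corresponding lower bound: any tracing-deterministic filter $\struct{F}'$ output simulating $\struct{F}_{\rm inp}$ has at least 23 states. The key tool is a propagation principle. If strings $s_1, s_2\in\Language{\struct{F}_{\rm inp}}$ reach a common state $v$ in $\struct{F}'$ and $w\in \KleeneStr{Y}$ is a common extension, then by tracing-determinism $s_1w$ and $s_2w$ reach a common state $v'$ whose non-empty output must lie in $\reachedc{\struct{F}_{\rm inp}}{s_1w}\cap \reachedc{\struct{F}_{\rm inp}}{s_2w}$; emptiness of this intersection for any choice of $w$ forbids the merger.

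I would organize the argument by string length. First, the initial state is forced by $\epsilon$ to take output in $\{\text{white}\}$, disjoint from every color reachable by a non-empty string, so it is distinct from every other state reached by a language string. Second, the 12 length-one strings split by color into a lower block of 5 (blue-constrained) and an upper block of 7 (gray-constrained), mutually distinct since blue and gray are disjoint; within each block, pairwise distinctness is witnessed by a single-symbol extension landing in incompatible colors---for instance, the extension $c$ separates $d$ from $e$ because $dc$ has permitted color $\{\text{lime}\}$ whereas $ec$ has $\{\text{teal}\}$, and $a$ separates $f$ from $g$ since $fa$ has $\{\text{pink}\}$ whereas $ga$ has $\{\text{cyan}\}$; all $\binom{5}{2}+\binom{7}{2}=31$ within-block pairs admit such a separator, readable directly from the edges of $\struct{F}_{\rm inp}$. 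Third, the tier-2 states contribute 2 in the upper half (pink vs.\ cyan, forced by disjoint colors) and, as argued below, at least 8 in the lower half, so the total becomes $1+5+7+8+2=23$.

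For the lower-half tier-2 count, I would show that any partition of the 13 length-two lower-half strings into consistent groups (groups whose permitted-output sets have non-empty intersection) has at least 8 parts. The strings $aa$, $ba$, $ca$ each carry a unique permitted color appearing nowhere else (green, brown, olive, respectively), hence occupy 3 singleton parts. Among the remaining 10 strings, the singleton-color demands of $ec$, $cc$, $dc$ force 3 pairwise-disjoint parts (teal, violet, lime), while the $b$-endings split between a red-class containing $\{ab,bb\}$ and a disjoint yellow-class containing $\{db,eb\}$, contributing 2 more parts, for $3+3+2=8$ in total.

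The main obstacle is verifying that no rearrangement of the multi-output chameleons $ac$, $bc$, $cb$ reduces the count below 8. I would handle this by a brief case check: $cb$'s two options (joining red or yellow) each preserve rather than collapse the red/yellow split; $ac$ (teal or violet) and $bc$ (violet or lime) can at best piggyback onto a class already forced by $ec$, $cc$, $dc$; and since red, yellow, teal, violet, lime are pairwise disjoint colors, no rearrangement can eliminate any of these five color-forced classes, so the lower bound stands.
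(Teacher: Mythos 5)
Your proposal is correct and follows essentially the same route as the paper's own proof: both establish the lower bound of 23 by exhibiting, for each pair of states that might be merged, either directly disjoint permitted-output sets or a common extension whose permitted outputs conflict (the paper's ``pink\slash light-blue conflict'' for the 7 gray vertices and ``extensions under $a$, $b$, $c$'' for the 5 royal-blue ones are exactly your tier-1 separators). Your write-up is simply more explicit---in particular, your closing case analysis of the multi-output strings $ac$, $bc$, $cb$ is unnecessary, since the eight singleton-colored length-two lower-half strings already force eight pairwise-incompatible states regardless of where those three are placed.
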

\begin{proof}
In $\struct{F}_{\rm det}$, other than gray and royal-blue, every other color is
reached by some string that reaches no other color, so at least one vertex must
be present representing that color.  For gray, any pair of the \num{7} vertices
has a pink\slash\hspace{0pt}light-blue conflict on an extension. So none of
those \num{7} pairs can be merged. For the \num{5} royal-blue vertices,
extensions under $a$, $b$, $c$ force each pair apart.
\end{proof}

Now examine Figure~\ref{fig:fndetout}, which gives $\struct{F}_{\rm nd}$, a filter that output
simulates $\struct{F}_{\rm inp}$ and is smaller than $\struct{F}_{\rm
det}$. It is a tracing-\underline{non}deterministic vertex single-output filter. It was
constructed as follows. In the upper-half, pink and light-blue extensions of strings with final symbol $\{a,b,\dots,h\}$ have been
separated and re-constituted in only \num{6} gray vertices.
The lower-half has exploited the fact that some strings have multiple valid outputs (like `$ac$' giving teal or purple); 
using this freedom allows sharing of some vertices.
And this $\struct{F}_{\rm nd}$ is the minimal tracing-nondeterministic filter
that output simulates $\struct{F}_{\rm inp}$. 

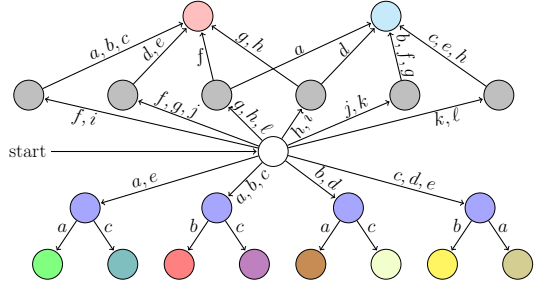
\begin{figure}[t]
\vspace*{-1pt}
\begin{minipage}{0.38\textwidth}
\vspace*{3ex}
\caption{A filter $\struct{F}_{\rm nd}$ 
that is tracing-nondeterministic and vertex single-output, and which 
output simulates $\struct{F}_{\rm inp}$. 
It solves Problem~\fm in the sense that no other
filter with fewer vertices output simulates $\struct{F}_{\rm inp}$.
$|V(\struct{F}_{\rm nd})|=21$.
\label{fig:fndetout}}
\end{minipage}
\hfill
\begin{minipage}{0.55\textwidth}
\vspace*{-4ex}
\hspace*{-6ex}
\scalebox{0.5}{
\newcommand\vs{1.5}
\newcommand\vS{2.1}
\newcommand\hs{1.0}
\newcommand\hsb{1.0}
\newcommand\hsa{2.5}
\newcommand\hsbf{3.5}

{
\begin{tikzpicture}[shorten >=1pt,node distance=1cm, on grid,auto]
\tikzset{every state/.style={semithick, minimum size=10pt, inner sep=8pt,circle}}
\tikzset{every path/.style={thick, font=\Large}}
\tikzset{initial distance=5.5cm}
\node[state, initial, fill=white] (q0)   {}; 
   \node[state] (p1) [above left=\vs cm and 6.5cm of q0, fill=lightgray] {}; 
   \node[state] (p2) [right=\hsa of p1, fill=lightgray] {}; 
   \node[state] (p3) [right=\hsa of p2, fill=lightgray] {}; 
   \node[state] (p4) [right=\hsa of p3, fill=lightgray] {}; 
   \node[state] (p5) [right=\hsa of p4, fill=lightgray] {}; 
   \node[state] (p6) [right=\hsa of p5, fill=lightgray] {}; 
   \node[state] (plus) [fill=pink, above left=\vS cm and 0.5cm of p3] {}; 
   \node[state] (minus) [fill=cyan!20, above left=\vS cm and 0.5cm of p5] {}; 
   
    \path[->] 
    (q0) edge node [pos=0.8, sloped, below, inner sep=0.2ex] {$f,i$} (p1)
    (q0) edge node [pos=0.7, sloped, above, inner sep=0.2ex] {$f,g,j$} (p2)
    (q0) edge node [pos=0.5, sloped, above, inner sep=0.2ex] {$g,h,\ell$} (p3)
    (q0) edge node [pos=0.6, sloped, below, inner sep=0.3ex] {$h,i$} (p4)
    (q0) edge node [pos=0.7, sloped, above, inner sep=0.2ex] {$j,k$} (p5)
    (q0) edge node [pos=0.8, sloped, below, inner sep=0.2ex] {$k,\ell$} (p6)
    (p1) edge node [pos=0.5, sloped, above, inner sep=0.2ex] {$a,b,c$} (plus) 
    (p2) edge node [pos=0.5, sloped, above, inner sep=0.2ex] {$d,e$} (plus)
    (p3) edge node [pos=0.5, sloped, left, rotate=90, inner sep=0.2ex] {$f$} (plus)
    (p3) edge node [pos=0.5, sloped, above, inner sep=0.2ex] {$a$} (minus)
    (p4) edge node [pos=0.6, sloped, above, inner sep=0.2ex] {$g,h$} (plus)
    (p4) edge node [pos=0.5, sloped, above, inner sep=0.2ex] {$d$} (minus)
    (p5) edge node [pos=0.45, sloped, above, inner sep=0.2ex] {$b,f,g$} (minus)
    (p6) edge node [pos=0.5, sloped, above, inner sep=0.2ex] {$c,e,h$} (minus);

   \node[state] (p1) [below left=\vs cm and 5cm of q0, fill=blue!35] {}; 
   \node[state] (p2) [right=\hsbf cm of p1, fill=blue!35] {}; 
   \node[state] (p3) [right=\hsbf cm of p2, fill=blue!35] {}; 
   \node[state] (p4) [right=\hsbf cm of p3, fill=blue!35] {}; 
   \node[state] (p5) [below left=\vs and \hsb of p1, fill=green!50] {}; 
   \node[state] (p6) [below right=\vs and \hsb of p1, fill=teal!50] {}; 
   \node[state] (p7) [below left=\vs and \hsb of p2,  fill=red!50] {}; 
   \node[state] (p8) [below right=\vs and \hsb of p2, fill=violet!50] {}; 
   \node[state] (p9) [below left=\vs and \hsb of p3,  fill=brown!90] {}; 
   \node[state] (p10) [below right=\vs and \hsb of p3,fill=lime!20] {}; 
   \node[state] (p11) [below left=\vs and \hsb of p4, fill=yellow!75] {}; 
   \node[state] (p12) [below right=\vs and \hsb of p4,fill=olive!40] {}; 
   
    \path[->] 
    (q0) edge node [pos=0.7, sloped, above, inner sep=0.2ex] {$a,e$} (p1)
    (q0) edge node [pos=0.5, sloped, below, inner sep=0.2ex] {$a,b,c$} (p2)
    (q0) edge node [pos=0.7, sloped, above, inner sep=0.2ex] {$b,d$} (p3)
    (q0) edge node [pos=0.7, sloped, above, inner sep=0.2ex] {$c,d,e$} (p4)
    (p1) edge node [pos=0.7, above, inner sep=1.8ex] {$a$} (p5)
    (p1) edge node [pos=0.7, above, inner sep=1.8ex] {$c$} (p6)
    (p2) edge node [pos=0.7, above, inner sep=1.8ex] {$b$} (p7)
    (p2) edge node [pos=0.7, above, inner sep=1.8ex] {$c$} (p8)
    (p3) edge node [pos=0.7, above, inner sep=1.8ex] {$a$} (p9)
    (p3) edge node [pos=0.7, above, inner sep=1.8ex] {$c$} (p10)
    (p4) edge node [pos=0.7, above, inner sep=1.8ex] {$b$} (p11)
    (p4) edge node [pos=0.7, above, inner sep=1.8ex] {$a$} (p12);

\end{tikzpicture}
}
}
\end{minipage}
\vspace*{-9ex}
\end{figure}

\begin{figure}[b]
\begin{subfigure}[b]{0.19\textwidth}
\includegraphics[width=\textwidth]{./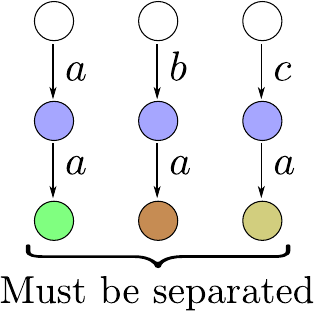}
\caption{Strings~with suffix `$a$'.\label{fig:nd_minimal_reason_a}}
\end{subfigure}
\hfill
\begin{subfigure}[b]{0.19\textwidth}
\includegraphics[width=\textwidth]{./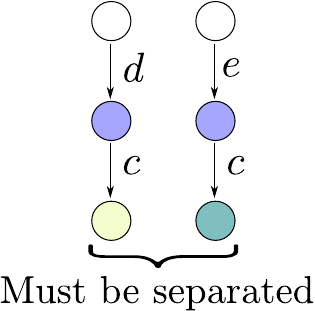}
\caption{Strings~with suffix `$b$'.\label{fig:nd_minimal_reason_b}}
\end{subfigure}
\hfill
\begin{subfigure}[b]{0.54\textwidth}
\includegraphics[width=\textwidth]{./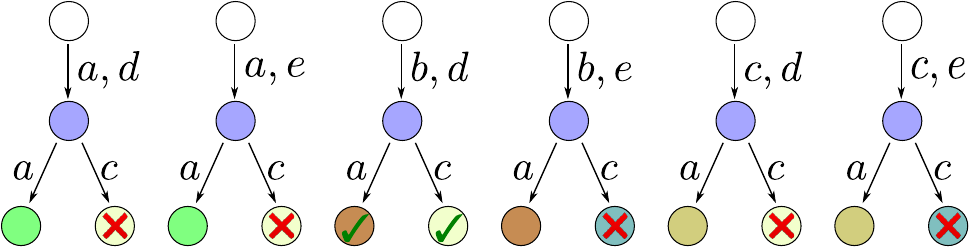}
\quad
\caption{The \num{6} possible ways to overlay strings with suffix `$b$' on the strings ending
with `$a$'.\label{fig:nd_minimal_reason_c}}
\end{subfigure}
\caption{(a). Strings `$aa$', `$ba$', `$ca$' need to go through at least \num{3} different royal-blue states.
(b). Strings `$dc$', `$ec$' need at least \num{2}. (c) When overlaying `$dc$', `$ec$' with
strings `$aa$', `$ba$', `$ca$', `$ec$' cannot go through the same royal-blue state as
any of `$aa$', `$ba$', `$ca$' without causing a conflict. Hence, at least \num{4}
royal-blue states are required to carry strings `$aa$', `$ba$', `$ca$', `$ec$'.
\label{fig:nd_minimal_reason}}
\end{figure}

\begin{restatable}[]{lemma}{ndmin}
$\struct{F}_{\rm nd}$ is a minimal tracing-nondeterministic filter that output
simulates $\struct{F}_{\rm inp}$. 
\label{lm:minimal_f_nd}
\end{restatable}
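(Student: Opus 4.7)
The plan is to establish the lower bound of 21 by a color-by-color accounting of mandatory vertices in any tracing-nondeterministic filter $\struct{F}'$ output-simulating $\struct{F}_{\rm inp}$. The colors partition naturally into three groups: (i) the ten "uncontested" colors (pink, light-blue in the upper-half; and green, teal, red, violet, brown, lime, yellow, olive in the lower-half), (ii) the single white initial vertex, and (iii) the two aggregating colors gray and royal-blue, whose multiplicities require finer combinatorial arguments.

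For group (i), each of the ten colors is produced uniquely by some length-2 string in $\Language{\struct{F}_{\rm inp}}$, so output compatibility forces at least one vertex of that color in $\struct{F}'$. Group (ii) contributes one more. These together yield 11 mandatory vertices. It remains to show that at least 6 gray and at least 4 royal-blue vertices are needed.

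For the royal-blue count of 4, I will formalize the argument sketched in Figure~\ref{fig:nd_minimal_reason}. The single-symbol strings $a$, $b$, $c$, $d$, $e$ all lead to royal-blue vertices in $\struct{F}_{\rm inp}$. The suffix `$a$' applied to $a$, $b$, $c$ yields three distinct colors (green, red, brown), so by output compatibility the states $\reachedf{\struct{F}'}{a}$, $\reachedf{\struct{F}'}{b}$, $\reachedf{\struct{F}'}{c}$ cannot share a common royal-blue vertex pairwise, forcing at least three. The strings `$dc$' and `$ec$' then need a fourth: any royal-blue vertex hit by `$e$' could conflict, on the $c$-extension, with a royal-blue vertex hit by one of $a$,$b$,$c$ (since the $c$-suffix outputs in $\{$purple, violet, olive$\}$ and the $a$-suffix outputs force incompatible choices on the shared vertex). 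A short case check rules out all six overlay possibilities enumerated in Figure~\ref{fig:nd_minimal_reason_c}.

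The main obstacle will be the lower bound of 6 for gray vertices, which does not reduce to a pure unique-color argument. The plan is to associate to each of the seven symbols $\{f,g,h,i,j,k,\ell\}$ its \emph{extension-profile}, namely the pair of subsets of $\{a,b,\ldots,h\}$ whose suffix leads, respectively, to pink and to light-blue in $\struct{F}_{\rm inp}$. Two symbols can route to a common gray vertex in $\struct{F}'$ only if merging their profiles does not force some extension string to be output-incompatible with $\struct{F}_{\rm inp}$ (i.e., no extension that reached only pink in $\struct{F}_{\rm inp}$ from one symbol is forced through a light-blue-only continuation from the other). I will build the conflict graph on these seven symbols, read off its edges from the labels in Figure~\ref{fig:finp}, and show its chromatic number equals 6, matching exactly the partition realized by the gray vertices of $\struct{F}_{\rm nd}$. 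Summing $10 + 1 + 6 + 4 = 21$ completes the minimality argument.
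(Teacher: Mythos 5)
Your overall accounting, the unique-color argument for the ten singleton colors plus the white initial vertex, and your royal-blue argument all track the paper's proof: the paper likewise obtains $3$ from the pairwise conflicts of `$aa$', `$ba$', `$ca$' and a fourth vertex from the overlay case-check on `$dc$', `$ec$'. The gap is in the part you yourself flag as the main obstacle, the lower bound of $6$ for gray. A conflict graph on the seven \emph{symbols} $\{f,\dots,\ell\}$ with chromatic number as the bound is only valid if each symbol must be routed to a single gray vertex, i.e., if the gray vertices induce a partition of the symbols. In a tracing-nondeterministic minimizer they instead form a \emph{cover}: a symbol's suffix demands may be split across several gray vertices, and in $\struct{F}_{\rm nd}$ every one of $f,\dots,\ell$ in fact reaches two gray vertices. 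Consequently two symbols whose full extension-profiles conflict can still share a gray vertex, provided the conflicting suffixes are carried elsewhere; for example $f$ (needing `$fd$'\,$\to$\,pink) and $i$ (needing `$id$'\,$\to$\,light-blue) conflict on suffix $d$, yet share the gray vertex labelled $\{f,i\}$ in $\struct{F}_{\rm nd}$, which only carries suffixes $a,b,c$. Worse, reading the edges of Figure~\ref{fig:finp}, \emph{every} pair of the seven symbols disagrees on the required color of some common suffix, so your conflict graph is $K_7$ with chromatic number $7$, not $6$; if that were a legitimate lower bound it would contradict the existence of $\struct{F}_{\rm nd}$ itself.

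The repair is to argue at the level of individual demand strings `$xy$' together with their required outputs, exactly as you (and the paper) do for royal-blue: on any accepting trace of `$xy$' the intermediate gray vertex is reached by $x$ and owns a $y$-successor of the required color, and two demands force distinct carriers precisely when co-locating them would let some third string of $\Language{\struct{F}_{\rm inp}}$ acquire a forbidden color. One must then exhibit six demands whose carriers are pairwise forced apart (or perform the analogous overlay case analysis). This is what the paper's terse ``using the same argument, we need at least \num{6} gray states'' refers to; without it your sum $10+1+6+4=21$ is not established.
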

\begin{proof}
The same argument as before justifies all vertices  that are the sole
representative of their color (i.e., those with colors other than royal-blue
and gray).  Next, we show that $\struct{F}_{\rm nd}$ has the minimal royal-blue
and gray states. As shown in Figure~\ref{fig:nd_minimal_reason_a}, strings
`$aa$', `$ba$', `$ca$' must go through at least \num{3} different royal-blue vertices.
Otherwise, if two of them go through the same royal-blue, then it will produce
two different outputs for each of those two strings, which violates output
simulation.  Similarly in Figure~\ref{fig:nd_minimal_reason_b}, `$dc$' and
`$ec$' must go though at least \num{2} different royal-blue states. Without any
limit, we might create \num{5} different royal-blue for the above five
strings to avoid conflicts. But to use only \num{3} or fewer royal-blue states,
strings `$dc$',`$ec$' have to be
overlaid with `$aa$', `$ba$', `$ca$' such that `$dc$', `$ec$' go
through the same royal-blue states as `$aa$', `$ba$', `$ca$' do. There are \num{6} ways
as shown in Figure~\ref{fig:nd_minimal_reason_c}: only `$dc$' can be overlaid
with `$ba$'.  The others will cause a conflict. For example, if `$ec$' visits the
same royal-blue state as `$aa$' does, then `$ac$' outputs lime-green, which is
incompatible with the original output, teal for `$ac$', in $\struct{F}_{\rm
inp}$. Therefore, we need at least \num{4} royal-blue states in the
tracing-nondeterministic minimizer: a total of \num{3} for `$aa$', `$ba$', `$ca$', and \num{1} for
`$ec$'. Using the same argument, we need at least \num{6} gray states.
Therefore, $\struct{F}_{\rm nd}$ has the minimal number of states for every
color, and hence is minimal.  
\end{proof}


\newcommand{%
\begin{wrapfigure}[23]{r}{0.3\textwidth}
\centering
\vspace*{-4ex}
\includegraphics[width=0.28\textwidth]{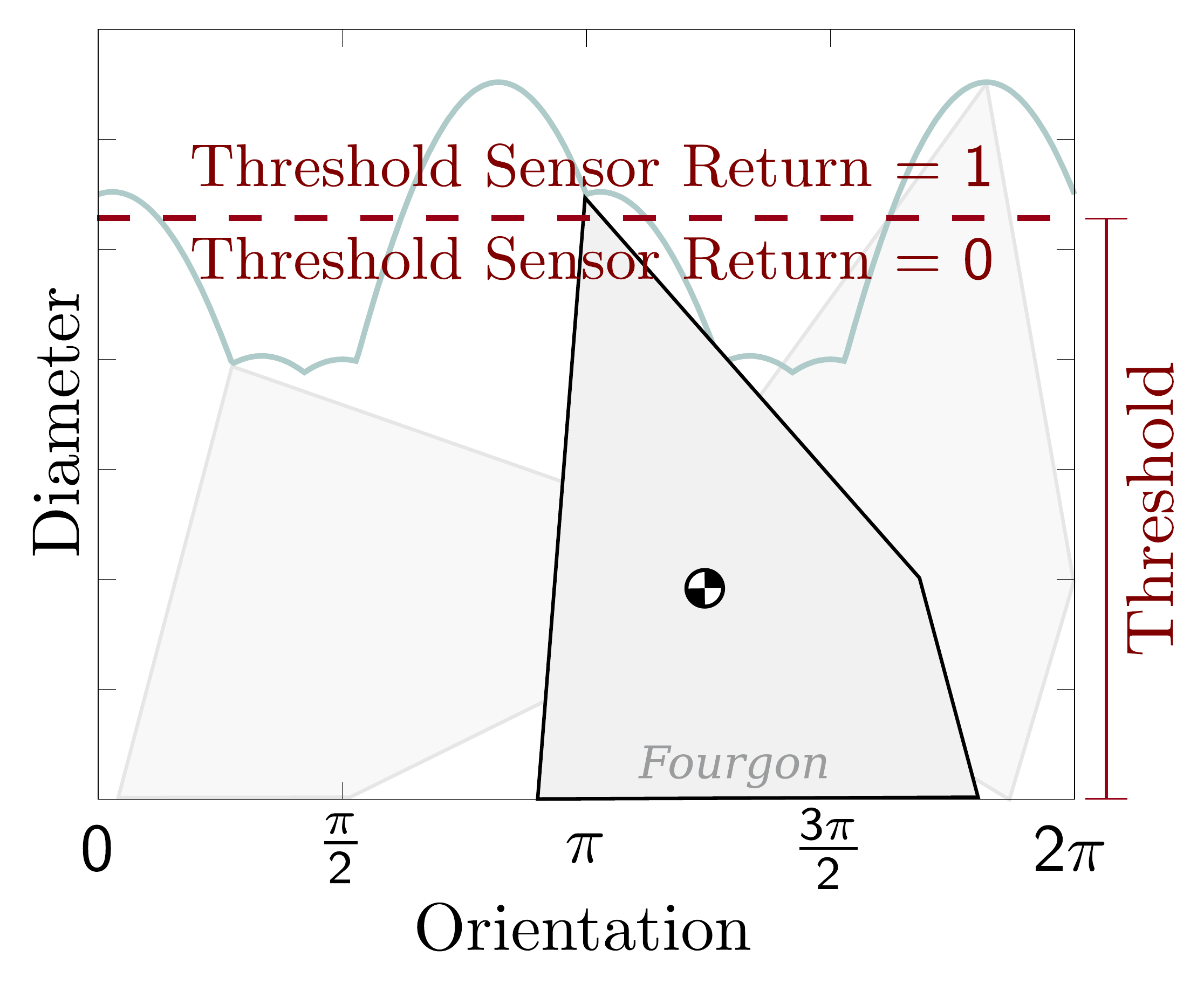}
\vspace*{-8ex}
\hspace*{-2ex}
\scalebox{0.385}{
{
\begin{tikzpicture}[shorten >=1pt,node distance=1cm, on grid,auto]
\tikzset{every state/.style={semithick, minimum size=10pt, inner sep=8pt,circle}}
\tikzset{every path/.style={thick, font=\Large}}
\tikzset{initial distance=0.5cm}
\node[state, initial, fill=orange!55] (s)   {}; 
   \node[state] (p1) [above right=1.7 cm and 1.1cm of s, fill=green!45] {}; 
   \node[state] (s1) [right=2.0 cm of p1, fill=orange!55] {}; 
   \node[state] (p2) [below right=0.3 cm and 3.0cm of s, fill=green!45] {}; 
   \node[state] (s2) [above right=1.0 cm and 2.0 cm of p2, fill=orange!55] {}; 
   \node[state] (t) [above right=0.00cm and 2.0 cm of s2, fill=white] {}; 
   \node[align=center] at (t) {T};
   \node[align=center] at (s) {S};
   \node[align=center] at (s1) {S};
   \node[align=center] at (s2) {S};
   \node[align=left] at (p1) {\small$\phantom{l}{65^\circ}$};
   \node[align=left] at (p2) {\small$\phantom{l}{65^\circ}$};
   
    \path[->] 
    (s) edge node [pos=0.5, sloped, above, inner sep=1.2ex, color=black!45!red!80] {$0$} (p1)
    (s) edge node [pos=0.5, sloped, above, inner sep=1.2ex, color=black!45!red!80] {$1$} (p2)
    (p1) edge node [pos=0.5, sloped, above, inner sep=1.2ex, color=black!45!red!80] {$0,1$} (s1)
    (p2) edge node [pos=0.5, sloped, above, inner sep=1.2ex, color=black!45!red!80] {$0,1$} (s2)
    (s2) edge node [pos=0.5, sloped, above, inner sep=1.2ex, color=black!45!red!80] {$0$} (t)
    (s1) edge node [pos=0.5, sloped, above, inner sep=1.2ex, color=black!45!red!80] {$0,1$} (p2)
    ;

    \path[->] 
    (s2) edge[bend angle=-70,bend right] node [pos=0.5, sloped, above, inner sep=1.2ex, color=black!45!red!80] {$1$} (p2);


\end{tikzpicture}
}
}
\vspace*{3ex}
\caption{A small feedback plan for a planar manipulation problem expessesd as a filter.  
Using \textcolor{black!45!red!80}{$0$} or \textcolor{black!45!red!80}{$1$}
readings from a threshold sensor, it orients the \num{4}-sided part shown via a
sequence of \textcolor{black!25!orange!95}{squeeze-gripper} and
\textcolor{black!25!green!95}{rotate-by-$65^\circ$} actions. (Based on
\cite[Figs.~14 and 15]{o2017concise}.) 
\label{fig:planoout}}

\end{wrapfigure}}{%
\begin{wrapfigure}[23]{r}{0.3\textwidth}
\centering
\vspace*{-4ex}
\includegraphics[width=0.28\textwidth]{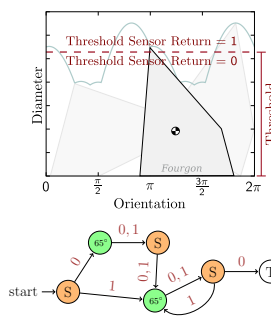}
\vspace*{-8ex}
\hspace*{-2ex}
\scalebox{0.385}{
{
\begin{tikzpicture}[shorten >=1pt,node distance=1cm, on grid,auto]
\tikzset{every state/.style={semithick, minimum size=10pt, inner sep=8pt,circle}}
\tikzset{every path/.style={thick, font=\Large}}
\tikzset{initial distance=0.5cm}
\node[state, initial, fill=orange!55] (s)   {}; 
   \node[state] (p1) [above right=1.7 cm and 1.1cm of s, fill=green!45] {}; 
   \node[state] (s1) [right=2.0 cm of p1, fill=orange!55] {}; 
   \node[state] (p2) [below right=0.3 cm and 3.0cm of s, fill=green!45] {}; 
   \node[state] (s2) [above right=1.0 cm and 2.0 cm of p2, fill=orange!55] {}; 
   \node[state] (t) [above right=0.00cm and 2.0 cm of s2, fill=white] {}; 
   \node[align=center] at (t) {T};
   \node[align=center] at (s) {S};
   \node[align=center] at (s1) {S};
   \node[align=center] at (s2) {S};
   \node[align=left] at (p1) {\small$\phantom{l}{65^\circ}$};
   \node[align=left] at (p2) {\small$\phantom{l}{65^\circ}$};
   
    \path[->] 
    (s) edge node [pos=0.5, sloped, above, inner sep=1.2ex, color=black!45!red!80] {$0$} (p1)
    (s) edge node [pos=0.5, sloped, above, inner sep=1.2ex, color=black!45!red!80] {$1$} (p2)
    (p1) edge node [pos=0.5, sloped, above, inner sep=1.2ex, color=black!45!red!80] {$0,1$} (s1)
    (p2) edge node [pos=0.5, sloped, above, inner sep=1.2ex, color=black!45!red!80] {$0,1$} (s2)
    (s2) edge node [pos=0.5, sloped, above, inner sep=1.2ex, color=black!45!red!80] {$0$} (t)
    (s1) edge node [pos=0.5, sloped, above, inner sep=1.2ex, color=black!45!red!80] {$0,1$} (p2)
    ;

    \path[->] 
    (s2) edge[bend angle=-70,bend right] node [pos=0.5, sloped, above, inner sep=1.2ex, color=black!45!red!80] {$1$} (p2);


\end{tikzpicture}
}
}
\vspace*{3ex}
\caption{A small feedback plan for a planar manipulation problem expessesd as a filter.  
Using \textcolor{black!45!red!80}{$0$} or \textcolor{black!45!red!80}{$1$}
readings from a threshold sensor, it orients the \num{4}-sided part shown via a
sequence of \textcolor{black!25!orange!95}{squeeze-gripper} and
\textcolor{black!25!green!95}{rotate-by-$65^\circ$} actions. (Based on
\cite[Figs.~14 and 15]{o2017concise}.) 
\label{fig:planoout}}

\end{wrapfigure}}

\subsection{Processing inputs incrementally and string single-output filters}
Thus far, filters have been discussed almost as though they were entirely
abstract objects.  When employed in practice, their output (an element from set
$C$, which we've shown visually through colors) is grounded to some specific
meaning.  It may have an interpretation as a sufficient statistic for some
estimation task, or as an answer to a particular query, or it may be an action
to have the robot execute.  Figure~\ref{fig:planoout} provides an example of a
feedback plan for a manipulation 
\begin{wrapfigure}[23]{r}{0.3\textwidth}
\centering
\vspace*{-4ex}
\includegraphics[width=0.28\textwidth]{figure/Fgon_diameter.pdf}
\vspace*{-8ex}
\hspace*{-2ex}
\scalebox{0.385}{
{
\begin{tikzpicture}[shorten >=1pt,node distance=1cm, on grid,auto]
\tikzset{every state/.style={semithick, minimum size=10pt, inner sep=8pt,circle}}
\tikzset{every path/.style={thick, font=\Large}}
\tikzset{initial distance=0.5cm}
\node[state, initial, fill=orange!55] (s)   {}; 
   \node[state] (p1) [above right=1.7 cm and 1.1cm of s, fill=green!45] {}; 
   \node[state] (s1) [right=2.0 cm of p1, fill=orange!55] {}; 
   \node[state] (p2) [below right=0.3 cm and 3.0cm of s, fill=green!45] {}; 
   \node[state] (s2) [above right=1.0 cm and 2.0 cm of p2, fill=orange!55] {}; 
   \node[state] (t) [above right=0.00cm and 2.0 cm of s2, fill=white] {}; 
   \node[align=center] at (t) {T};
   \node[align=center] at (s) {S};
   \node[align=center] at (s1) {S};
   \node[align=center] at (s2) {S};
   \node[align=left] at (p1) {\small$\phantom{l}{65^\circ}$};
   \node[align=left] at (p2) {\small$\phantom{l}{65^\circ}$};
   
    \path[->] 
    (s) edge node [pos=0.5, sloped, above, inner sep=1.2ex, color=black!45!red!80] {$0$} (p1)
    (s) edge node [pos=0.5, sloped, above, inner sep=1.2ex, color=black!45!red!80] {$1$} (p2)
    (p1) edge node [pos=0.5, sloped, above, inner sep=1.2ex, color=black!45!red!80] {$0,1$} (s1)
    (p2) edge node [pos=0.5, sloped, above, inner sep=1.2ex, color=black!45!red!80] {$0,1$} (s2)
    (s2) edge node [pos=0.5, sloped, above, inner sep=1.2ex, color=black!45!red!80] {$0$} (t)
    (s1) edge node [pos=0.5, sloped, above, inner sep=1.2ex, color=black!45!red!80] {$0,1$} (p2)
    ;

    \path[->] 
    (s2) edge[bend angle=-70,bend right] node [pos=0.5, sloped, above, inner sep=1.2ex, color=black!45!red!80] {$1$} (p2);


\end{tikzpicture}
}
}
\vspace*{3ex}
\caption{A small feedback plan for a planar manipulation problem expessesd as a filter.  
Using \textcolor{black!45!red!80}{$0$} or \textcolor{black!45!red!80}{$1$}
readings from a threshold sensor, it orients the \num{4}-sided part shown via a
sequence of \textcolor{black!25!orange!95}{squeeze-gripper} and
\textcolor{black!25!green!95}{rotate-by-$65^\circ$} actions. (Based on
\cite[Figs.~14 and 15]{o2017concise}.) 
\label{fig:planoout}}

\end{wrapfigure}
problem (re-drawn from~\cite{o2017concise},
itself inspired from~\cite{taylor88sensor} and work in this line). The outputs
describe how the system should operate: orange and green map to actions to
execute (specific motions for a gripper), and white indicates that the system
should terminate as the task has been completed successfully.

We envision a controller for the robot being represented internally via a
encoding of this filter, which is then used to process sensor
observations and generate actions. It does this incrementally.  For instance, in Figure~\ref{fig:planoout}, 
the controller simply has to track the current vertex to know what to have the
robot do.  
Minimizing the filter has the advantage, then, of reducing the controller's
memory footprint.  So now consider what happens when, in light the previous
examples of superior reduction via nondeterminism, the minimized filter
exhibits some nondeterminism.  Suppose something simple like the \num{6}-state
filter in Figure~\ref{fig:ex_sso_first} is obtained as a result.

For the filter in Figure~\ref{fig:ex_sso_first}, when the robot obtains,
first, an `$a$', it has a choice from the initial white vertex. It might make the
arbitrary decision to take the lower-branch. 
If the sensor provides it an `$x$' next, it would proceed to the orange state.
Then, if a `$y$' is obtained (possible, since `$axy$' is in the input language
and so some output should be produced) it has then become clear that the initial
choice could've been better made if the upper-branch were taken.  (And
introducing the dashed $z$-edge and yellow-state shows that, though in the basic
setup the upper-branch has a superset of the strings of the lower-branch, there
needn't be a choice the covers the other options.) In order to resolve this
missing $y$, the filter effectively needs to rewind back to the `$a$' and take a
different edge. Such an approach is undesirable because it requires the
controller store the (unbounded) history of readings, and the filter is no
longer operating in an efficient incremental manner.

One solution is to trace forward on all matching edges: so in
Figure~\ref{fig:ex_sso_first}, after `$a$', some markers keep the position on
both upper- and lower-branches. Then, when a `$y$' arrives, though one trace
dies off, the second can proceed. This requires a number of markers, but they are bounded
in the size of the filter. One question, when one has multiple vertices that
are currently reached (so both blue and red after `$a$' in Figure~\ref{fig:ex_sso_first}),
is which output should be chosen? This motivates the following definition, wherein we
are spared the burden of making a choice:

\begin{definition}[\sso]
\label{def:sso}
A filter $\struct{F}=(V, V_0, Y, \tau, \allowbreak C, c)$ is \defemp{\sso} if
$\forall s\in \Language{\struct{F}}$, $|\reachedc{\struct{F}}{s}|=1$; Otherwise,
call it \defemp{\smo}. 
\end{definition}

\textsl{Immediate remark:} Any tracing-deterministic vertex single-output
filter will be a \sso filter.

In the discussion earlier, we made the case that the rewinding interpretation
for tracing-nondeterminism is undesirable from a practical point of view.  But
notice how Definition~\ref{def:sso} comes into play also under that interpretation: for a
general tracing-nondeterministic filter, when a string is rewound and traced
forward anew, it might generate a different sequence of output colors.
Returning to our concrete example in Figure~\ref{fig:ex_sso_first}, following
the symbol `$y$' being encountered on the lower-branch, after
rewinding and tracing `$a$' forward a second time on the upper-branch now,  
the color blue is associated with the string.  If this filter is encoding a plan,
then
blue corresponds to an action; but it is not appropriate to generate that
action\,---that represents a point in the past---\,as the action for red was
already executed, with sensor reading `$x$' being produced thereafter.
Had both branches produced the 
same outputs (that is, been \sso), then even if the robot decided to store the
sequence of input symbols and rewind to process the string forward, 
avoiding crashing, and reaching a vertex from which to resume tracing at the newest
symbol, this would be a purely internal affair. 
The actions generated in the world 
would be consistent with the ones actually executed earlier in time.

Is the class of \sso filters useful? We can answer in the affirmative. Suppose
one wishes to minimize a filter, but maintain the sort of temporal/causal consistency
just described. Then one may seek a \sso minimizer and, indeed, there exist
such filters that are smaller than any deterministic filter.  Returning to
example filter $\struct{F}_{\rm inp}$ of Figure~\ref{fig:finp}, now examine
$\struct{F}_{\rm sso}$ presented in Figure~\ref{fig:fsooout}.
It is tracing-nondeterministic but is \sso, and it has one fewer vertex than 
$\struct{F}_{\rm det}$.
Further, this particular filter $\struct{F}_{\rm sso}$ is a \sso minimizer
of $\struct{F}_{\rm inp}$. 

\begin{restatable}[]{lemma}{ssomin}
$\struct{F}_{\rm sso}$ is a \sso minimizer of $\struct{F}_{\rm inp}$.
\label{lm:minimal_f_sso}
\end{restatable}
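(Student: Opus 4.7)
My plan is to follow the template set by Lemmas~\ref{lm:minimal_f_det} and~\ref{lm:minimal_f_nd}: first verify that $\struct{F}_{\rm sso}$ is an \sso filter that output simulates $\struct{F}_{\rm inp}$, then derive a matching lower bound on the number of vertices needed by any \sso filter that output simulates $\struct{F}_{\rm inp}$.

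For the upper-bound direction, because $\struct{F}_{\rm inp}$ is a tree with a finite language, it suffices to trace each $s\in\Language{\struct{F}_{\rm inp}}$ through $\struct{F}_{\rm sso}$ and verify two things: first, $\emptyset \neq \reachedc{\struct{F}_{\rm sso}}{s} \subseteq \reachedc{\struct{F}_{\rm inp}}{s}$ (giving language inclusion and output compatibility, i.e.\ the two conditions of Definition~\ref{def:stdos}); and second, $|\reachedc{\struct{F}_{\rm sso}}{s}|=1$, giving the \sso property of Definition~\ref{def:sso}. This is a routine, if tedious, finite case check against the figure.

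For the lower bound I would partition the state budget by color, exactly as in Lemma~\ref{lm:minimal_f_det}. Every color other than royal-blue and gray is the sole output of some string in $\Language{\struct{F}_{\rm inp}}$, so output compatibility forces any \sso simulator to carry at least one vertex of each such color. The two remaining groups---royal-blue and gray---are where the \sso requirement bites. The combinatorial scaffold of Lemma~\ref{lm:minimal_f_nd}, namely the conflict sets on strings such as `$aa$', `$ba$', `$ca$', `$dc$', `$ec$' together with the overlay enumeration of Figure~\ref{fig:nd_minimal_reason_c}, still applies; however, the set of admissible overlays shrinks because, under \sso, every string in $\Language{\struct{F}_{\rm inp}}$ has its output \emph{committed} to one fixed color across all traces.

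The hard part will be pinning down this tightened constraint. In $\struct{F}_{\rm nd}$, a multi-output string such as `$ac$' was allowed to output teal along one trace and purple along another, so two royal-blue vertices carrying those respective traces could safely co-exist; \sso disallows this, since $\reachedc{\struct{F}_{\rm sso}}{ac}$ must be a singleton contained in the original multi-output set. I would revisit the six overlays of Figure~\ref{fig:nd_minimal_reason_c}, flag those overlays that depended on the output-set slack, and show that one previously admissible overlay now induces a forbidden single-output conflict. The expectation is that this forces exactly one additional royal-blue or gray vertex beyond the count achieved by $\struct{F}_{\rm nd}$, and summing across colors recovers $|V(\struct{F}_{\rm sso})|=22$, establishing minimality.
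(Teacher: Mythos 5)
Your plan matches the paper's proof essentially step for step: the paper likewise reuses the Lemma~\ref{lm:minimal_f_nd} argument for the gray/upper-half count and the singleton colors, and then shows that the one overlay admissible in the tracing-nondeterministic case (sharing a royal-blue vertex between the `$a$'-suffix and `$c$'-suffix groups) is killed by the \sso constraint because string `$bc$' would then output both purple and lime-green, forcing \num{5} royal-blue vertices and hence $|V(\struct{F}_{\rm sso})|=22$. Your anticipated "one additional royal-blue or gray vertex" is exactly that extra royal-blue state, so the proposal is correct and follows the same route.
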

\begin{proof}
The upper-half follows the same argument as that from
Lemma~\ref{lm:minimal_f_nd}, and the upper-half of $\struct{F}_{\rm sso}$ is 
minimal. For the lower-half, when overlaying `$ec$' with `$dc$', string `$bc$'
outputs both purple and lime-green, which is not \sso. Hence, neither of
`$dc$',`$ec$' can be overlayed with strings `$aa$', $`ba'$, $ca$', so as to create a \sso
filter. Hence, we need at least \num{5} royal-blue states: \num{3} for
`$aa$', `$ba$', `$ca$', and \num{2} more for `$dc$', `$ec$', and the lower-half of $\struct{F}_{\rm
sso}$ is also minimal. 
\end{proof}

\begin{figure}[t]
\vspace*{-1ex}
\begin{minipage}{0.54\textwidth}
\vspace*{-5ex}
\hspace*{-3ex}
\scalebox{0.5}{
\newcommand\vs{1.5}
\newcommand\vS{2.1}
\newcommand\hs{1.0}
\newcommand\hsb{1.0}
\newcommand\hsa{2.5}
{
\begin{tikzpicture}[shorten >=1pt,node distance=1cm, on grid,auto]
\tikzset{every state/.style={semithick, minimum size=10pt, inner sep=8pt,circle}}
\tikzset{every path/.style={thick, font=\Large}}
\tikzset{initial distance=5.5cm}
\node[state, initial, fill=white] (q0)   {}; 
   \node[state] (p1) [above left=\vs cm and 6.5cm of q0, fill=lightgray] {}; 
   \node[state] (p2) [right=\hsa of p1, fill=lightgray] {}; 
   \node[state] (p3) [right=\hsa of p2, fill=lightgray] {}; 
   \node[state] (p4) [right=\hsa of p3, fill=lightgray] {}; 
   \node[state] (p5) [right=\hsa of p4, fill=lightgray] {}; 
   \node[state] (p6) [right=\hsa of p5, fill=lightgray] {}; 
   \node[state] (plus) [fill=pink, above left=\vS cm and 0.5cm of p3] {}; 
   \node[state] (minus) [fill=cyan!20, above left=\vS cm and 0.5cm of p5] {}; 
   
    \path[->] 
    (q0) edge node [pos=0.8, sloped, below, inner sep=0.2ex] {$f,i$} (p1)
    (q0) edge node [pos=0.7, sloped, above, inner sep=0.2ex] {$f,g,j$} (p2)
    (q0) edge node [pos=0.5, sloped, above, inner sep=0.2ex] {$g,h,\ell$} (p3)
    (q0) edge node [pos=0.6, sloped, below, inner sep=0.3ex] {$h,i$} (p4)
    (q0) edge node [pos=0.7, sloped, above, inner sep=0.2ex] {$j,k$} (p5)
    (q0) edge node [pos=0.8, sloped, below, inner sep=0.2ex] {$k,\ell$} (p6)
    (p1) edge node [pos=0.5, sloped, above, inner sep=0.2ex] {$a,b,c$} (plus) 
    (p2) edge node [pos=0.5, sloped, above, inner sep=0.2ex] {$d,e$} (plus)
    (p3) edge node [pos=0.5, sloped, left, rotate=90, inner sep=0.2ex] {$f$} (plus)
    (p3) edge node [pos=0.5, sloped, above, inner sep=0.2ex] {$a$} (minus)
    (p4) edge node [pos=0.6, sloped, above, inner sep=0.2ex] {$g,h$} (plus)
    (p4) edge node [pos=0.5, sloped, above, inner sep=0.2ex] {$d$} (minus)
    (p5) edge node [pos=0.45, sloped, above, inner sep=0.2ex] {$b,f,g$} (minus)
    (p6) edge node [pos=0.5, sloped, above, inner sep=0.2ex] {$c,e,h$} (minus);

   \node[state] (q1) [below left=\vs cm and 6cm of q0, fill=blue!35] {}; 
   \node[state] (q2) [right=3cm of q1,              fill=blue!35] {}; 
   \node[state] (q5) [right=3cm of q2,              fill=blue!35] {}; 
   \node[state] (q3) [right=3cm of q5,              fill=blue!35] {}; 
   \node[state] (q4) [right=3cm of q3,              fill=blue!35] {}; 
   \node[state] (q6) [below left=\vs cm and \hs cm of q1, fill=green!50] {}; 
   \node[state] (q8) [below right=\vs cm and \hs cm of q1, fill=red!50] {}; 
   \node[state] (q9) [below =\vs cm of q2, fill=brown!90] {}; 
   \node[state] (q12) [below =\vs cm of q3, fill=lime!20] {}; 
   \node[state] (q13) [below left=\vs cm and \hs cm of q4, fill=yellow!75] {}; 
   \node[state] (q15) [below right=\vs cm and \hs cm of q4, fill=teal!50] {}; 
   \node[state] (q18) [below left=\vs cm and \hs cm of q5, fill=violet!50] {}; 
   \node[state] (q16) [below right=\vs cm and \hs cm of q5, fill=olive!40] {}; 
   
    \path[->] 
    (q0) edge node [pos=0.7, sloped, above] {$a$} (q1)
    (q0) edge node [pos=0.7, sloped, above] {$b$} (q2)
    (q0) edge node [pos=0.7, sloped, above] {$d$} (q3)
    (q0) edge node [pos=0.7, sloped, above] {$e$} (q4)
    (q0) edge node [pos=0.4, sloped, right, rotate=90] {$c$} (q5)

    (q1) edge node [pos=0.5, sloped, above] {$a$} (q6)
    (q1) edge node [pos=0.4, sloped, above] {$b$} (q8)
    (q1) edge node [pos=0.2, sloped, above] {$c$} (q18)
    (q2) edge node [pos=0.3, sloped, above] {$b$} (q8)
    (q2) edge node [pos=0.3, sloped, right, rotate=90] {$a$} (q9)
    (q2) edge node [pos=0.3, sloped, above] {$c$} (q18)
    (q3) edge node [pos=0.3, sloped, right, rotate=90] {$c$} (q12)
    (q3) edge node [pos=0.2, sloped, above] {$b$} (q13)
    (q4) edge node [pos=0.5, sloped, above] {$b$} (q13)
    (q4) edge node [pos=0.5, sloped, above] {$c$} (q15)
    (q5) edge node [pos=0.5, sloped, above] {$a$} (q16)
    (q5) edge node [pos=0.5, sloped, above] {$c$} (q18)
    (q5) edge node [pos=0.2, sloped, above] {$b$} (q13);
        

    \path[gray, densely dashed,->] 
    (q3) edge node [pos=0.2, sloped, above] {$a$} (q16)
    (q4) edge node [pos=0.2, sloped, above] {$a$} (q16);

\end{tikzpicture}
}
}
\end{minipage}
\hfill
\begin{minipage}{0.36\textwidth}
\caption{A filter $\struct{F}_{\rm sso}$ 
that is tracing-nondeterministic and vertex single-output, and which 
output simulates $\struct{F}_{\rm inp}$. 
It solves \fm in the sense that no other
\sso
filter with fewer vertices output simulates $\struct{F}_{\rm inp}$.
$|V(\struct{F}_{\rm sso})|=22$.
\label{fig:fsooout}}
\end{minipage}
\vspace*{-8ex}
\end{figure}


One additional point worth noting is that $\struct{F}_{\rm nd}$ has a
language which is larger than that of $\struct{F}_{\rm inp}$. Both
$\struct{F}_{\rm det}$ and $\struct{F}_{\rm sso}$ 
match the input language exactly.  Specifically, $\struct{F}_{\rm nd}$ will
process strings `$da$' and `$ea$', and assign them outputs. These show up as a
sort of aliasing in the compression down to \num{4} royal-blue vertices.  It would be
erroneous to believe that this is necessary for the gap in minimizer sizes.  If
one adds the dashed gray $a$-edges to Figures~\ref{fig:finp},
\ref{fig:fdetout}, and ~\ref{fig:fsooout}, then all have precisely the same
language.

\vspace*{-0.9ex}
\subsection{Classes of minimizer} 
\vspace*{-1pt}

The three filters $\struct{F}_{\rm det}$, $\struct{F}_{\rm sso}$, and
$\struct{F}_{\rm nd}$ are all minimizers of $\struct{F}_{\rm inp}$ within their
respective classes.  The class of tracing-non\-deter\-ministic filters, by
definition, omits the constraint that forces deter\-minism, and thus contains all the
tracing-deter\-ministic ones as well.  As already remarked, the class of \sso
filters includes all the tracing-deter\-ministic ones.  And
tracing-non\-deter\-ministic filters may well violate the \sso constraint.  So, as
solutions to the problem of minimization, tracing-deter\-ministic ones can be
no smaller than tracing-non\-deter\-ministic or \sso ones; and \sso ones may be
no smaller than tracing-non\-deter\-ministic ones. Of course, for a particular
problem, there \textsl{might} be a size gap the other way, but there needn't
always be.  
(Consider, for instance, Figure~\ref{fig:ex_sso_first}; it is its
own  nondeterministic minimizer, but is also the deterministic minimizer as
well.) 
The three $\struct{F}_{\rm det}$, $\struct{F}_{\rm sso}$, and
$\struct{F}_{\rm nd}$ show a separation in their sizes for the single input of
$\struct{F}_{\rm inp}$.

This motivating discussion now concluded, we will always be explicit in what
follows about the class of minimizer sought when studying a variant of \fm. 
Further, we will also clarify in assumptions on the input filter.
We shall abbreviate:
\vspace*{-8pt}
\label{sec:summary_of_opts}
\begin{tightenumerate}
\item [\dfm:] where both the input and output filter are
tracing-deterministic ({\sc df}), e.g. $\struct{F}_{\rm det}$.
\item [\sfm:] where the input filter is tracing-deterministic ({\sc df}), and the
output filter must be \sso ({\sc sso}), e.g. $\struct{F}_{\rm sso}$ or  $\struct{F}_{\rm sso}$ plus dashed $a$-edges.
\item [\mfm:] where the input filter is tracing-deterministic ({\sc df}), and the output
filter can be tracing-nondeterministic ({\sc smo}), e.g. $\struct{F}_{\rm nd}$.
\end{tightenumerate}
\vspace*{-8pt}

\subsection{Aside: single- and multi-output vertices} 

One observation is that $\struct{F}_{\rm det}$ and $\struct{F}_{\rm nd}$ are vertex single-output filters. 
Indeed, no minimizer need have ever multi-output vertices itself.
\begin{lemma}
If $\struct{F}_{m}$ is a vertex multi-output filter that output simulates
$\struct{F}$, then there exists some $\struct{F}_{s}$
as an output simulator of $\struct{F}$, where 
$\struct{F}_{s}$ is vertex single-output 
and, moreover, $|V(\struct{F}_{m})|=
|V(\struct{F}_{s})|$.
\end{lemma}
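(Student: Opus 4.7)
My plan is to construct $\struct{F}_s$ from $\struct{F}_m$ by leaving the state set, initial states, alphabet, and transition function completely untouched, and only \emph{pruning} the output function so that each vertex carries exactly one output. Concretely, write $\struct{F}_m = (V, V_0, Y, \tau, C, c_m)$ and, for every $v \in V$, pick an arbitrary element $\alpha(v) \in c_m(v)$ (possible because $c_m(v) \neq \emptyset$ by definition). Then set $c_s(v) = \{\alpha(v)\}$ and define $\struct{F}_s = (V, V_0, Y, \tau, C, c_s)$. The equality $|V(\struct{F}_m)| = |V(\struct{F}_s)|$ is immediate, and $\struct{F}_s$ is \vso by construction.

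The two remaining checks are the language inclusion and output compatibility clauses of Definition~\ref{def:stdos}. Since $\struct{F}_s$ and $\struct{F}_m$ share the same states, initial states, and transition function, they have literally identical traces and therefore $\Language{\struct{F}_s} = \Language{\struct{F}_m}$ and $\reachedf{\struct{F}_s}{s} = \reachedf{\struct{F}_m}{s}$ for every string $s$. Language inclusion for $\struct{F}_s$ over $\struct{F}$ then follows from the corresponding property assumed of $\struct{F}_m$ over $\struct{F}$.

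For output compatibility, fix any $s \in \Language{\struct{F}}$. Then
\[
\reachedc{\struct{F}_s}{s} \;=\; \bigcup_{v \in \reachedf{\struct{F}_s}{s}} c_s(v) \;=\; \{\alpha(v) : v \in \reachedf{\struct{F}_m}{s}\} \;\subseteq\; \bigcup_{v \in \reachedf{\struct{F}_m}{s}} c_m(v) \;=\; \reachedc{\struct{F}_m}{s},
\]
where the inclusion holds because $\alpha(v) \in c_m(v)$ for every $v$. Composing with the output-compatibility hypothesis $\reachedc{\struct{F}_m}{s} \subseteq \reachedc{\struct{F}}{s}$ completes the verification.

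There is no real obstacle here: the claim is essentially a bookkeeping observation that shrinking the output sets of an output-simulating filter can only shrink the produced outputs, while leaving the traced state set and thus the language intact. The only thing one must be careful about is to note that, had the definition allowed $c_m(v) = \emptyset$, the construction would fail, but this case is explicitly ruled out by the codomain $\powSet{C}\setminus\{\emptyset\}$ in the filter definition.
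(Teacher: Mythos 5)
Your construction is exactly the paper's: select one output arbitrarily at each multi-output vertex, keep everything else fixed, and observe that the resulting filter output simulates $\struct{F}_m$ and hence $\struct{F}$. The proposal is correct and simply spells out the language-inclusion and output-compatibility checks that the paper leaves implicit.
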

\begin{proof}
Any vertex in $\struct{F}_{m}$ with multiple outputs can have a single one selected arbitrarily to yield
$\struct{F}_{s}$. Then $\struct{F}_{s}$ output simulates $\struct{F}_{m}$, and hence $\struct{F}$ too.
\end{proof}
\vspace*{-1ex}

Multi-output vertices allow expression of some flexibility in the $Y$ to
$C$ mapping.  As already apparent in Figure~\ref{fig:finp} in
the lead-up to Figures~\ref{fig:fdetout}, \ref{fig:fndetout}, and~\ref{fig:fsooout}, 
multi-output vertices can be helpful for expressing some input
to a minimization problem, specifically in providing a specification which
grants a degree of freedom.
(Both \cite{zhang20cover} and \cite{zhang19accelerating}, in
their respective introductory sections, have detailed robotics scenarios in which there is a
degree of freedom which is natural, and is cleanly expressed via multi-output
vertices.)

The preceding helps emphasize that one use of filters is as specifications of
the appropriate range of system responses for a given input.  This is a
distinct use from their adoption directly as the object that computes outputs,
incrementally, from inputs as and when they arrive.

\vspace*{-3pt}
\subsection{The role of output simulation and consistency across sequences}
\vspace*{-3pt}

Let's revisit the topics arising in the discussion directly following
Definition~\ref{def:sso}, but now from a point of view that puts aside
considerations of specific edge/vertex structure.  Any filter $\struct{F}$ that
is \sso, when interpreted as an input--output map on sequences, i.e., from
$\Language{\struct{F}} \subseteq \KleeneStr{Y}$ to $\KleeneStr{C}$, is
deterministic.  That is, it is a function.  Though the filter may be
tracing-nondeterministic, any of those branching choices are purely internal to
it, and any observer treating it as a black-box is shielded from those
decisions.

For some given filter $\struct{G}$, its \sso minimizer must produce, for each
string, some output compatible with that produced on $\struct{G}$.
Language inclusion
ensures that the minimizer can process every string that filter $\struct{G}$ can, and all
filters process a prefix-closed collection of strings.  But it is important to
note that Definition~\ref{def:stdos} does not place any additional requirement
on the relationship between the outputs produced for a string and those
produced by its prefixes.  Or, when thinking of the filter as a function on
sequences: if sequence~$s$ maps to~$t$, then a subsequence of~$s$ need not map
to a subsequence of~$t$.  Put differently: it is a mapping from
$\Language{\struct{F}} \subseteq \KleeneStr{Y}$ to $C$ not to $\KleeneStr{C}$,
and when one naturally uses the prefix closure property to interpret it as lifted to
sequences of $C$, something extra appears, seemingly.
This \textsl{extra} is not in the requirements for output simulation.

Specifically, no `tracing correlations' need be preserved. To make this
concrete, see in Figure~\ref{fig:ex_sso_input} how on string `$ax$' only blue
followed by green, or red followed by orange, can be produced. Output
simulation permits any filter that produces either red or blue on `$a$', and
green or orange on `$ax$'.  This freedom (four cases rather than only two) may be necessary for behavior to be
implementable incrementally.  Using our example: after adding in the $z$-edge/yellow-vertex,
Figure~\ref{fig:ex_sso_second} gives a tracing-deterministic output simulator
that does not preserve the red--orange--yellow output ordering; but no
tracing-deterministic output simulator exists that can preserve the output
orderings produced by both `$axy$' and `$axz$' strings.

In the very next section, this will not be a concern for a
much simpler reason:  when we consider the hardness of forms of minimization to
\sso filters compared to other cases (i.e., the \sfm problem \emph{vs.} \dfm and
\mfm), we shall assume we are given a tracing-deterministic input.
(Without this restriction, minimization can only become harder---a fact we use in Section~\ref{sec:diff})  And the process
of converting some general filter into a tracing-deter\-min\-istic input (e.g.
\cite[Algorithm~2]{setlabelrss}) removes the tracing dependent output
orderings.


\section{Hardness of {\titledfm}, \titlesfm, and \titlemfm}
\label{sec:generalcase}
Having seen the gap between the minimizers of \dfm, \sfm
and \mfm, we now examine the computational complexity of
minimization.

\subsection{Complexity of \titledfm, \titlesfm, and \titlemfm}
In this section, we will give worst-case complexity analyses for the \num{3} problems.


Prior work has proved the hardness results for \dfm by reducing from a graph
coloring problem:
\begin{theorem}[Theorem 9~\cite{zhang20cover}]
\dfm is \npcomplete.
\end{theorem}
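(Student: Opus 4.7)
The plan has two parts. First, establish membership in \np by exhibiting a polynomial-time verifier; second, reduce (classical) graph $k$-coloring to the decision version of \dfm.

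For the \np-membership, I would take the decision version of \dfm with instance $(\struct{F}, k)$ and use a candidate filter $\struct{F}^{\dagger}$ with $|V(\struct{F}^{\dagger})|\leq k$ as the certificate. Since both $\struct{F}$ and $\struct{F}^{\dagger}$ are tracing-deterministic, output simulation can be checked by performing a product traversal of $\struct{F} \times \struct{F}^{\dagger}$: from each reachable pair $(v, v')$, confirm that every symbol enabled at $v$ in $\struct{F}$ is enabled at $v'$ in $\struct{F}^{\dagger}$ (language inclusion), and that $c(v') \subseteq c(v)$ (output compatibility). Each reachable pair is visited once, giving a polynomial bound in $|V(\struct{F})| \cdot |V(\struct{F}^{\dagger})|$.

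For \np-hardness, I would reduce from $k$-colorability of a graph $G = (V, E)$. The idea is to build a tracing-deterministic filter $\struct{F}_G$ whose natural states correspond to the vertices of $G$, equipped with auxiliary structure forcing that two states may be merged in any output simulator only if the corresponding vertices of $G$ are non-adjacent and may consistently receive the same color. Concretely, one creates a root with an outgoing edge on a distinct symbol $a_u$ to a state $s_u$ for each $u\in V$, all bearing a common output, and then for each edge $(u,v)\in E$ attaches short extensions from $s_u$ and $s_v$ whose output colors conflict along some common suffix. By Definition~\ref{def:stdos}, output compatibility plus language inclusion then forbid merging $s_u$ with $s_v$ in any deterministic output simulator. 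The remaining (non-conflicting) merges among the $s_u$'s correspond exactly to partitioning $V$ into independent sets, i.e., a proper coloring. Choosing the threshold parameter of the decision instance to be $|V| - (|V|-k) + (\text{overhead of the fixed gadget})$ then makes the minimization question equivalent to $k$-colorability.

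The main obstacle is the gadget design in the second part. One must ensure that (i) the extension gadgets on each edge genuinely block the corresponding merge under Definition~\ref{def:stdos} without creating unintended conflicts between non-adjacent vertices, (ii) every valid $k$-coloring of $G$ can be \emph{realized} as a legal state-merging in a tracing-deterministic filter of the targeted size (so that the root's outgoing transitions on the symbols $\{a_u\}$ can coexist after merging without violating tracing-determinism), and (iii) the overall construction is polynomial in $|V| + |E|$. Once the gadget is shown correct, the equivalence between the minimum output-simulating filter size and the chromatic number of $G$ follows, and combined with \np-membership this yields \npcomplete{}ness.
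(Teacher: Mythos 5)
The paper does not prove this theorem itself; it imports it from \cite{zhang20cover}, noting only that the prior work establishes hardness ``by reducing from a graph coloring problem.'' Your sketch follows essentially that same strategy: \np membership via a polynomial-time product-traversal check of output simulation (which is valid precisely because both filters are tracing-deterministic, so each string reaches a unique state pair), plus a graph-coloring reduction with per-vertex states and conflicting edge gadgets.

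One point deserves more care than your write-up gives it. You phrase the backward direction of the reduction in terms of ``merges among the $s_u$'s,'' but the paper is explicit that a minimal output simulator need not be a quotient or state-merging of the input filter at all (indeed, no equivalence relation suffices for filter minimization in general). The correct formulation is extensional: in any tracing-deterministic output simulator $\struct{F}^{\dagger}$, define $\varphi(u)$ to be the unique state of $\struct{F}^{\dagger}$ reached by the string $a_u$; language inclusion guarantees $\varphi$ is total, and the conflicting suffixes attached for each edge $(u,v)\in E$ force $\varphi(u)\neq\varphi(v)$ via output compatibility, so $\varphi$ yields a proper coloring with at most $|V(\struct{F}^{\dagger})|$ minus the gadget overhead many colors. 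With that rephrasing, together with the forward direction you already describe (a $k$-coloring realizes a simulator of the target size), your argument is a faithful reconstruction of the cited approach.
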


Next, we will leverage the existing results to show that both \mfm and \sfm are
\pspacecomplete. 


Prior results in tracing-nondeterministic filter minimization show that 
to check whether one tracing-nondeterministic filter output simulates another
is in \pspace.

\begin{lemma}[Lemma~7~\cite{zhang2021nondeterminism}]
\label{lm:nd_fm_space}
Given a tracing-nondeterministic filter $\struct{F}$ and a
tracing-nondeterministic filter $\struct{F'}$, it is in \pspace to check whether
$\struct{F'}$ output simulates $\struct{F}$. 
\end{lemma}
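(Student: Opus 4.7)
The plan is to show that the complement is in \pspace{} via nondeterministic simulation using subset-construction-style bookkeeping, then appeal to Savitch's theorem (\pspace{} $=$ \text{co}\pspace{} $=$ \pspace{}). Concretely, a failure of output simulation takes one of two forms: (a) language inclusion is violated, i.e.\ there exists $s \in \Language{\struct{F}} \setminus \Language{\struct{F}'}$; or (b) output compatibility is violated, i.e.\ there exists $s \in \Language{\struct{F}}$ and a color $c \in \reachedc{\struct{F}'}{s} \setminus \reachedc{\struct{F}}{s}$. Both failure conditions can be detected by a nondeterministic algorithm that streams symbols one at a time while maintaining, in polynomial space, the current reached-state sets $\reachedf{\struct{F}}{s}$ and $\reachedf{\struct{F}'}{s}$ (each a subset of a vertex set, hence representable in $O(|V(\struct{F})| + |V(\struct{F}')|)$ bits).

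For failure (a), I would guess the input string $s$ symbol-by-symbol; at each step the deterministic subset update of $\reachedf{\struct{F}'}{s}$ and a nondeterministically maintained \emph{single} state $v \in \reachedf{\struct{F}}{s}$ (walking along one path of $\struct{F}$) are updated. The algorithm nondeterministically decides to stop at some step and accepts iff the current $v$ exists (so $s \in \Language{\struct{F}}$) but $\reachedf{\struct{F}'}{s} = \emptyset$ (so $s \notin \Language{\struct{F}'}$). For failure (b), I would maintain both full reached sets $\reachedf{\struct{F}}{s}$ and $\reachedf{\struct{F}'}{s}$ via standard subset updates; nondeterministically halt at some step and accept iff there exists $v' \in \reachedf{\struct{F}'}{s}$ and $c \in c(v')$ with $c \notin \bigcup_{u \in \reachedf{\struct{F}}{s}} c(u)$, which is checked deterministically in polynomial time at termination. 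Each step uses polynomial space, and although the guessed string may be exponentially long, space is re-used between steps.

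Taking the disjunction of these two procedures yields an NPSPACE algorithm for the complement problem (non-output-simulation). By Savitch's theorem, NPSPACE $=$ \pspace, and since \pspace{} is closed under complement, checking that $\struct{F}'$ output simulates $\struct{F}$ is in \pspace. The only subtlety---and the point that would need care in a full write-up---is arguing that witness strings can be searched over in bounded space despite potentially exponential length. This is handled exactly as in the classical NFA-nonuniversality/inclusion argument: we never store $s$, only the current reached subsets, and nondeterministic termination lets us stop as soon as a violation is exhibited. No explicit determinization of either filter is required, which is precisely what keeps the algorithm in polynomial space rather than forcing an exponential blow-up.
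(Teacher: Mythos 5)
The paper does not actually prove this lemma: it is imported verbatim by citation from prior work (Lemma~7 of \cite{zhang2021nondeterminism}), so there is no in-paper argument to compare yours against. That said, your proposal is the standard and correct way to establish the bound, and it is the argument one would expect the cited source to use: decompose non-simulation into a language-inclusion failure and an output-compatibility failure, detect each by guessing a witness string symbol-by-symbol while maintaining only the reached-state subsets of $\struct{F}$ and $\struct{F}'$ (never the string itself), and close with NPSPACE $=$ \pspace{} and closure of \pspace{} under complement. This is exactly the classical NFA-nonuniversality/inclusion technique, and your remark that no determinization is materialized is the right thing to emphasize.

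One small detail to tighten in case (b): as written, you accept when some $v' \in \reachedf{\struct{F}'}{s}$ carries a color outside $\bigcup_{u \in \reachedf{\struct{F}}{s}} c(u)$, but if $\reachedf{\struct{F}}{s} = \emptyset$ that union is empty and the test fires spuriously on strings $s \notin \Language{\struct{F}}$, for which output compatibility imposes no constraint. Since you are already maintaining $\reachedf{\struct{F}}{s}$, simply add the conjunct $\reachedf{\struct{F}}{s} \neq \emptyset$ to the acceptance condition (you state the failure condition correctly with $s \in \Language{\struct{F}}$, so this is an implementation slip, not a conceptual gap). It is also worth making explicit that a shortest violating witness has length at most $2^{|V(\struct{F})|+|V(\struct{F}')|}$, since the pair of reached subsets (plus the single tracked state in case (a)) determines all future behavior; a polynomial-size counter then guarantees termination of every nondeterministic branch.
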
 

In both \mfm and \sfm, the input filter is tracing-deterministic, which is a
special case in Lemma~\ref{lm:nd_fm_space}. Hence, it is also \pspace to check
output simulation for \mfm and \sfm. Accordingly, we have the \pspace results for \mfm
immediately:
\begin{lemma}[\titlemfm is in \pspace]
Given a tracing-deterministic filter $\struct{F}$ and a tracing-nondeterministic
filter $\struct{F'}$, it is in \pspace to check whether $\struct{F'}$ output
simulates $\struct{F}$. 
\end{lemma}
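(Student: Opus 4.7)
The plan is essentially a single observation: the claim is a direct specialization of Lemma~\ref{lm:nd_fm_space}. Since every tracing-deterministic filter is, by definition, a tracing-nondeterministic filter that simply happens to satisfy the extra constraints that $|V_0|=1$ and that outgoing edges from each vertex carry disjoint label sets, the filter $\struct{F}$ in the hypothesis can be regarded as a tracing-nondeterministic filter to which Lemma~\ref{lm:nd_fm_space} applies unchanged. I would therefore present the proof as a single-sentence invocation of the prior lemma, with a short prose justification that the tracing-deterministic case sits strictly inside the tracing-nondeterministic one.

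There is no genuine obstacle: the \pspace algorithm underlying Lemma~\ref{lm:nd_fm_space} takes no advantage of nondeterminism in the left-hand filter, and indeed restricting $\struct{F}$ to be tracing-deterministic only shrinks the set of behaviors to be verified. I would point this out explicitly to assure the reader that no modification of the cited algorithm is needed, and would not re-derive any of the machinery of Lemma~\ref{lm:nd_fm_space}.

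Finally, to connect the body of the lemma (which, as stated, is only about the verification step) to its bracketed title claiming that the full minimization problem \mfm lies in \pspace, I would add a brief remark: an \textsf{NPSPACE} procedure can guess a candidate output filter $\struct{F}'$ of size at most $k$, which fits in polynomial space since $k \leq |V(\struct{F})|$ and each transition and output list is polynomially bounded, and then apply the output-simulation check established in the body. Savitch's theorem promotes this to \pspace. This addendum requires no new argument beyond the cited lemma, so the whole result reads as a corollary of prior work.
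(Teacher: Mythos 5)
Your proposal matches the paper exactly: the paper likewise treats this lemma as an immediate specialization of Lemma~\ref{lm:nd_fm_space}, noting only that a tracing-deterministic input filter is a special case of a tracing-nondeterministic one, and offers no further argument. Your closing remark about guessing a candidate minimizer and invoking Savitch's theorem is a correct (and welcome) clarification of how the verification lemma supports the bracketed claim about \mfm, but it is not something the paper spells out.
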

But for \sfm, we additionally need to check whether the output filter is \sso,
and are required to show whether this procedure is in \pspace as well. 

To facilitate the proof for \sfm, we use the
following graph product:
\begin{definition}[product graph]
Given filters $\struct{F}$ and $\struct{F'}$, all the strings that are in both  
$\struct{F}$ and $\struct{F'}$ are produced by their tensor product graph
$\struct{G}$, denoted as $\struct{G}=\struct{F}\pgproduct\struct{F'}$.
$\struct{G}$ has initial states $V_0(\struct{F})\times V_0(\struct{F'})$, and  
for every string $s\in \Language{\struct{F}}\cap\Language{\struct{F'}}$,
$(v, v')\in \reachedv{\struct{G}}{s}$. 
\end{definition}

Using the above operator, we show that \sfm is also in \pspace:
\begin{lemma}[\titlesfm is in \pspace]
Given a determinsitic filter $\struct{F}$ and an non-determinsitic filter
$\struct{F'}$, it is in \pspace to check whether $\struct{F'}$ is \sso, and 
$\struct{F'}$ output simulates $\struct{F}$ or not
is in \pspace.
\label{lm:smfm_pspace}
\end{lemma}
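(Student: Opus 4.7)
The plan is to split the lemma's claim into its two independent checks. The first, that $\struct{F'}$ output simulates $\struct{F}$, comes essentially for free: a tracing-deterministic $\struct{F}$ is a special case of a tracing-nondeterministic one, so Lemma~\ref{lm:nd_fm_space} supplies a \pspace procedure for it verbatim, as the authors already remark immediately before stating this lemma. The remaining work is therefore just to place the \sso test in \pspace, and then combine.

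For the \sso test I would use the self-product $\struct{G} = \struct{F'}\pgproduct\struct{F'}$. By the defining property of the product, a pair $(v_1,v_2)$ is reachable in $\struct{G}$ from $V_0(\struct{F'})\times V_0(\struct{F'})$ if and only if there is some string $s\in\Language{\struct{F'}}$ with both $v_1,v_2\in\reachedf{\struct{F'}}{s}$. Hence $\struct{F'}$ fails to be \sso exactly when some reachable $(v_1,v_2)\in V(\struct{G})$ has $|c(v_1)\cup c(v_2)|\geq 2$; this single criterion simultaneously catches a reached multi-output vertex (when $v_1=v_2$) and two distinct co-reached vertices whose output sets witness disagreement. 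So the procedure is: build $\struct{G}$ (of size $|V(\struct{F'})|^2$), run ordinary graph reachability from the initial pairs, and scan reachable pairs for an output-conflict witness. This is polynomial time, hence polynomial space; running it sequentially with the Lemma~\ref{lm:nd_fm_space} subroutine (reusing space) gives the combined \pspace bound. An alternative, should one prefer a genuinely small-space argument, is to guess a trajectory in $\struct{G}$ symbol-by-symbol, maintaining only the current pair $(v_1,v_2)$ and a step counter bounded by $|V(\struct{F'})|^2$, and accept on the first output conflict; this is in \nptext\textsf{PSPACE}, which collapses to \pspace by Savitch's theorem.

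The main obstacle, to the extent there is one, is justifying the witness characterization: that $\struct{F'}$ is \sso if and only if no reachable pair of $\struct{G}$ carries incompatible output sets. The forward direction is immediate from the product definition, and the reverse follows by a short induction on the length of $s$, using that $\reachedf{\struct{F'}}{s}=\bigcup_{v_0\in V_0(\struct{F'})}\reachedvf{v_0}{\struct{F'}}{s}$. Everything else, construction of $\struct{G}$ and reachability inside it, is entirely standard.
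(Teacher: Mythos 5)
Your proposal is correct and follows essentially the same route as the paper: it defers the output-simulation check to Lemma~\ref{lm:nd_fm_space} (noting the deterministic input is a special case) and decides the \sso property via the self-product $\struct{F'}\pgproduct\struct{F'}$, scanning reachable pairs for an output conflict. The only cosmetic difference is that you fold the paper's separate preliminary test for a multi-output vertex ($|c(v')|>1$) into the single criterion $|c(v_1)\cup c(v_2)|\geq 2$ on reachable (possibly diagonal) pairs, which is a harmless unification.
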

\begin{proof}
Following Lemma~\ref{lm:nd_fm_space}, it is in \pspace to check whether
the output simulating condition holds for \sfm. Next, we will show that it also
takes polynomial space to check whether $\struct{F'}$ is \sso or not. First,
if there is a state $v'$ in $\struct{F'}$ that has more than a single output, i.e.,
$|c(v')|>1$, then the strings in $\reaching{\struct{F'}}{v'}$ have more than one
output, and $\struct{F'}$ is not \sso. Otherwise, we build a product with itself
$\struct{G}=\struct{F'}\pgproduct \struct{F'}$ to vet states
that are reached by the same string. According to the
construction, there are at most $|V(\struct{F'})|^2$ states in $\struct{G}$.
Additionally, every pair of states that are nondeterministically reached by the
same string in $\struct{F'}$ will appear as a state in $\struct{G}$. And every
state in $\struct{G}$ consists of two states that are 
reached by some string in $\struct{F'}$. Hence, the states in $\struct{G}$
capture all pairs of states that are non-deterministically reached. For every
string $s$ in $\struct{F'}$, let $V'_s=\reachedv{\struct{F'}}{s}$ be the set of
states non-deterministically reached by $s$. Then we know that $s$ has precisely one
output if and only if every pair of states in $V'_s$ share the same
output mutually. Therefore, we can say $\struct{F'}$ is \sso, if for every state
$(v'_i, v'_j)$ in $\struct{G}$, $c(v'_i)=c(v'_j)$. Otherwise, it is not. 
So we have a polynomial space procedure.
\end{proof}

Next, we will examine the hardness results for both \sfm and \mfm via
some other results from automata theory. Similar to a filter, an automaton $\struct{A}$
is defined as a tuple $(V_0, V, \Sigma, \tau, F)$, where $\Sigma$ is the
alphabet, $F$ is the set of final states. Different from the filter,
the language of an automaton is called the accepting language, denoted henceforth as
$\ALanguage{\struct{A}}$, which is the set of strings that reach the final states.
But the difference between $\ALanguage{\cdot}$ and just $\Language{\cdot}$
disappears when all states in the automaton are final.
Automata in which $F = V$ are
called {\sc ASF} automata (where `{\sc ASF}' stands for `All States Final').    

A prior result for {\sc ASF} automata shows that it is \pspacecomplete to check 
whether an {\sc ASF} NFA with alphabet $\Sigma$ has accepting language
$\KleeneStr{\Sigma}$ or not: 
\begin{lemma}[{\sc NFA-NonUniversality-ASF}~\cite{kao2009nfas}] 
Given an {\sc ASF} NFA $\struct{A}$ with alphabet $\Sigma$, if $|\Sigma|\geq 2$,
it is \pspacecomplete to check whether $\ALanguage{\struct{A}} = \KleeneStr{\Sigma}$
holds or not. 
\end{lemma}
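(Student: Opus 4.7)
The plan is to handle \pspace-containment and \pspace-hardness separately, treating the universality question as the complement of a reachability question in the subset construction of $\struct{A}$.

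For containment, I would work with the complement formulation: non-universality asks whether some $s\in\KleeneStr{\Sigma}$ has no non-crashing trace in $\struct{A}$ from its initial state. Because every reachable state of $\struct{A}$ is final, a string is rejected exactly when every trace on it runs out of matching transitions, i.e., the subset construction drives $\{q_0\}$ to $\emptyset$. This invites an implicit subset-construction search: nondeterministically guess symbols one at a time, maintain the current subset $S\subseteq V$ of reached states, and halt accepting once $S=\emptyset$. Updating $S$ takes polynomial time, $S$ fits in $O(|V|)$ space, and a counter capped at $2^{|V|}$ (the number of distinct subsets) cuts off unproductive searches, using only $O(|V|)$ bits. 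Hence non-universality lies in \textsc{NPSPACE}, and thus in \pspace by Savitch's theorem.

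For hardness, I plan to reduce from the classical \pspace-hard NFA universality problem (over alphabets of size at least two). Given an NFA $N=(Q,\Sigma,\delta,q_0,F)$, first totalize $N$ by adjoining a non-final sink $t$ with $\Sigma$-self-loops and redirecting missing transitions to $t$; this preserves $\Language{N}$. Introduce a fresh marker $\#\notin\Sigma$ and set $\Sigma'=\Sigma\cup\{\#\}$. Build an {\sc ASF} NFA $N'$ with state set $Q\cup\{u\}$, all final, that retains every transition of $N$, equips a new state $u$ with self-loops on every symbol of $\Sigma'$, and, for each $q\in F$, adds an arc $q\xrightarrow{\#} u$; no $\#$-arc is added from any state in $Q\setminus F$, so reading $\#$ there crashes. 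The correctness claim is $\ALanguage{N'}=\KleeneStr{\Sigma'}$ iff $\Language{N}=\KleeneStr{\Sigma}$: universality of $N'$ forces every string $w\#$ to have a non-crashing trace, which must traverse some $q\xrightarrow{\#} u$ with $q\in F$, thereby witnessing $w\in\Language{N}$; conversely, every $w'\in\KleeneStr{\Sigma'}$ splits uniquely into a $\Sigma$-prefix before its first $\#$ (if any), on which universality of $N$ supplies a trace reaching $F$, from which $u$'s self-loops absorb the suffix. If $w'$ contains no $\#$, totalization of $N$ supplies the trace directly.

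The hardest part is organizing the reduction to honor the ASF restriction: in an ASF NFA the only rejection mechanism is crashing, so the construction must engineer crashes in $N'$ that correspond exactly to non-acceptance in $N$. The $\#$-marker trick, combined with totalization of $N$ on $\Sigma$, funnels all potential disagreement between the behaviors of $N$ and $N'$ into a single decision, namely whether the $\#$ arc is available from a given state, and that decision directly encodes membership in $F$. The $|\Sigma|\geq 2$ hypothesis is exactly what affords introducing a fresh marker while keeping the source universality instance in the \pspace-hard regime.
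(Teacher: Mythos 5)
The paper offers no proof of this lemma at all: it is imported as a black box from Kao, Rampersad and Shallit~\cite{kao2009nfas}, so there is no internal argument to compare yours against. Your self-contained proof is correct and is a genuinely independent route. The containment half is the standard on-the-fly subset-construction search for a string driving the reached set to $\emptyset$ (the only rejection mechanism an {\sc ASF} NFA has), followed by Savitch's theorem and closure of \pspace under complement. The hardness half is a clean reduction from ordinary NFA universality: the fresh end-marker $\#$ converts the non-{\sc ASF} acceptance condition ``the trace ends in $F$'' into the {\sc ASF}-expressible condition ``the trace can read one more symbol without crashing,'' and since the sink $t$ and every state outside $F$ lack $\#$-arcs, $w\#$ survives in $N'$ exactly when $w\in\Language{N}$; the self-looping state $u$ then absorbs arbitrary suffixes, so $N'$ is universal over $\Sigma\cup\{\#\}$ iff $N$ is universal over $\Sigma$. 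Two small remarks. First, the totalization step is harmless but unnecessary: when $N$ is universal, every $w\in\KleeneStr{\Sigma}$ already has a non-crashing trace into $F$, and in the other direction extra crashes only help, so the sink $t$ could be dropped. Second, your reduction enlarges the alphabet by one symbol, so it directly establishes hardness only for instances with at least three letters; that still proves the lemma as stated (such instances lie within the $|\Sigma|\geq 2$ regime), but it is marginally weaker than the cited theorem, which obtains \pspacecompleteness already for binary alphabets via a direct encoding of polynomial-space computation histories rather than by padding an existing universality instance.
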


Then we will show that both \sfm and \mfm are \pspacehard by reducing from
{\sc NFA-NonUniversality-ASF}.
\begin{lemma}[\titlesfm and \titlemfm are \pspacehard]
\label{lm:smfm_pspacehard}
Given a tracing-deterministic filter $\struct{F}$ and an \sso
tracing-nondeterministic filter $\struct{F'}$, if $|Y(\struct{F})|\geq 2$ and
$|Y(\struct{F'})|\geq 2$, it is \pspacehard to check whether $\struct{F'}$
output simulates $\struct{F}$ or not. 
\end{lemma}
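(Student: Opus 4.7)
The plan is to reduce from \textsc{NFA-Universality-ASF}, the complement of \textsc{NFA-NonUniversality-ASF}; since \pspace is closed under complement, universality is also \pspacecomplete. Given an {\sc ASF} NFA $\struct{A}$ with alphabet $\Sigma$ and $|\Sigma|\geq 2$, I would construct, in polynomial time, a tracing-deterministic filter $\struct{F}$ and an \sso tracing-nondeterministic filter $\struct{F'}$ for which $\struct{F'}$ output simulates $\struct{F}$ precisely when $\ALanguage{\struct{A}}=\KleeneStr{\Sigma}$.

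The filter $\struct{F}$ would be a single-state loop: one initial vertex $v_0$ with $\tau(v_0,v_0)=\Sigma$ carrying a single output color $c_0$. This is tracing-deterministic and has $\Language{\struct{F}}=\KleeneStr{\Sigma}$. The filter $\struct{F'}$ would be $\struct{A}$ reinterpreted as a filter---same states, initial states, and transitions (after the WLOG pruning of unreachable states)---with every state assigned the singleton output $\{c_0\}$. Because $\struct{A}$ is {\sc ASF}, a string crashes in $\struct{F'}$ iff it is rejected by $\struct{A}$, so $\Language{\struct{F'}}=\ALanguage{\struct{A}}$. Since every reachable vertex carries only the color $c_0$, for any $s\in\Language{\struct{F'}}$ the set $\reachedc{\struct{F'}}{s}$ equals $\{c_0\}$, so $\struct{F'}$ is \sso.

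Correctness then follows by unpacking Definition~\ref{def:stdos}: language inclusion $\Language{\struct{F}}\subseteq\Language{\struct{F'}}$ amounts to $\KleeneStr{\Sigma}\subseteq\ALanguage{\struct{A}}$, which is precisely universality; and output compatibility $\reachedc{\struct{F'}}{s}\subseteq\reachedc{\struct{F}}{s}$ is automatic because both sets equal $\{c_0\}$ whenever both filters admit $s$. The alphabet hypothesis $|Y(\struct{F})|=|Y(\struct{F'})|=|\Sigma|\geq 2$ is met by the input instance, and the construction is obviously polynomial.

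The main potential obstacle is really a non-obstacle: collapsing all outputs of $\struct{F'}$ to a single color kills two birds at once, making $\struct{F'}$ \sso and trivialising the output-compatibility side of simulation, so the entire content of the equivalence lives in language inclusion. The one bookkeeping point worth flagging is that this single reduction witnesses hardness of both {\titlesfm} and {\titlemfm}: the $\struct{F'}$ I build meets the \sso requirement (the stricter one, demanded by \sfm), and any \sso filter is \emph{a fortiori} a legal smo output candidate for \mfm.
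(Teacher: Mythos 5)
Your construction is the same as the paper's: a single-state $\Sigma$-self-loop filter $\struct{F}$ and the ASF NFA $\struct{A}$ recolored monochromatically as $\struct{F'}$, so that output compatibility is trivial and output simulation collapses to $\KleeneStr{\Sigma}\subseteq\ALanguage{\struct{A}}$. The only cosmetic difference is that you phrase the source problem as universality and invoke closure of \pspace under complement, whereas the paper cites the (non-)universality lemma directly; the argument is otherwise identical, including the observation that one reduction covers both \sfm and \mfm.
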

\begin{proof}
Proof by reduction from {\sc NFA-NonUniversality-ASF}. Given an {\sc ASF} NFA
$\struct{A}$ with alphabet $\Sigma$ and $|\Sigma|\geq 2$, then treat
$\struct{A}$ as a filter $\struct{F}'$ with an output function that colors every state
the same color $c_0$. Then the interaction language of filter $\struct{F}'$ is the same
as the accepting language of automata $\struct{A}$, i.e.,
$\Language{\struct{F'}}=\ALanguage{\struct{A}}$.  Next, create a
tracing-deterministic filter $\struct{F}$, where there is only a single state
with a self-loop bearing $\Sigma$. This state is colored~$c_0$.
Then $\Language{\struct{F}}=\KleeneStr{\Sigma}$. Therefore,
$\ALanguage{\struct{A}}=\KleeneStr{\Sigma}\Longleftrightarrow
\Language{\struct{F}}\subseteq\Language{\struct{F'}}$. Hence, to check {\sc
NFA-NonUniversality-ASF}, i.e., whether $\ALanguage{\struct{A}}=\KleeneStr{\Sigma}$
holds or not, is equivalent to checking 
$\Language{\struct{F}}\subseteq\Language{\struct{F'}}$.
Additionally, for every string $s$ in $\struct{F}$, the output from $\struct{F}$
is the same as that from $\struct{F'}$. Hence,
$\Language{\struct{F}}\subseteq\Language{\struct{F'}}$ if and only if
$\struct{F'}$ output simulates $\struct{F}$. Therefore, {\sc
NFA-NonUniversality-ASF} has been reduced to \sfm in polynomial time, and \sfm is 
\pspacehard. The same reduction also shows that \mfm is \pspacehard.   
\end{proof}
Note that this reduction requires the input to have a non-unitary 
alphabet, so as to be general enough to model the {\sc NFA-NonUniversality-ASF}
problem.   

\begin{theorem}
Both \sfm and \mfm are \pspacecomplete.
\end{theorem}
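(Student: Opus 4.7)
My plan is to establish the theorem as a corollary that packages the immediately preceding lemmas: one side giving \pspace membership and the other giving \pspace-hardness, for the decision versions of \sfm and \mfm.

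For \pspace membership of the decision problem (``given a tracing-deterministic $\struct{F}$ and $k \in \Naturals$, does there exist an output simulator of the required class with at most $k$ states?''), I would run the standard guess-and-verify procedure in nondeterministic polynomial space. The procedure nondeterministically writes down a candidate filter $\struct{F}^{\dagger}$ over alphabet $Y(\struct{F})$ with at most $k$ vertices (its representation has polynomial size), then invokes Lemma~\ref{lm:smfm_pspace} for \sfm (or its \mfm counterpart) to verify output simulation, and for \sfm additionally applies the \sso check also supplied by Lemma~\ref{lm:smfm_pspace}. Since each verification fits in \pspace and NPSPACE collapses to \pspace by Savitch's theorem, both decision problems lie in \pspace.

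For the lower bound I would appeal to Lemma~\ref{lm:smfm_pspacehard}, which gives a polynomial-time reduction from {\sc NFA-NonUniversality-ASF} to the verification question under both the \sfm and the \mfm restriction (and respects the alphabet-size hypothesis $|Y| \geq 2$). To lift the hardness from verification to minimization, I would use the candidate produced by the reduction as a size threshold, querying the minimization oracle with $k := |V(\struct{F}')|$. Because the input $\struct{F}$ produced by that reduction is a one-state filter with a single output color and a self-loop on $\Sigma$, any output simulator of it has interaction language containing $\KleeneStr{\Sigma}$; hence whether such a simulator of size at most $|V(\struct{F}')|$ exists is polynomially equivalent to the NFA-universality question, which carries the \pspace-hardness over to minimization.

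The main obstacle I anticipate is precisely this bridging: Lemma~\ref{lm:smfm_pspacehard} fixes a specific witness $\struct{F}'$, whereas the minimization decision is an existential question over witnesses of bounded size. If the one-state-witness argument above does not quite close the gap, I would instead construct a direct reduction from {\sc NFA-NonUniversality-ASF} into the minimization problem, embedding the NFA into the input filter and adding distinguishing colors so that the minimum size of an output simulator provably tracks universality. Either way, the two bounds combine to give \pspacecompleteness for both \sfm and \mfm.
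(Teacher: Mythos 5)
Your overall architecture matches the paper's: the paper's entire proof is ``Combine Lemma~\ref{lm:smfm_pspace} and Lemma~\ref{lm:smfm_pspacehard}'', and you assemble the same two lemmas, adding the (correct, and worth making explicit) guess-and-verify-plus-Savitch step for membership of the size-$k$ decision problem. The difference is that you notice, rightly, that both lemmas speak about the \emph{verification} question---does a given $\struct{F}'$ output simulate $\struct{F}$?---whereas the theorem is about \emph{minimization}, and you try to bridge the two. The paper does not attempt this bridge at all; its lemma statements and titles conflate checking with minimizing.

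Unfortunately your primary bridge is broken. In the reduction of Lemma~\ref{lm:smfm_pspacehard}, the tracing-deterministic input $\struct{F}$ is a single state with a self-loop on $\Sigma$ and one color. That $\struct{F}$ output simulates itself, is tracing-deterministic and hence \sso, and has one state; so the minimization query ``is there an output simulator with at most $k:=|V(\struct{F}')|\ge 1$ states?'' is trivially \emph{yes} regardless of whether $\ALanguage{\struct{A}}=\KleeneStr{\Sigma}$. The candidate $\struct{F}'$ built from the NFA plays no role in the existential question, so nothing about universality is recovered by querying the minimization oracle. Your fallback---a direct reduction that embeds the NFA into the \emph{input} filter and uses distinguishing colors so that the minimum simulator size tracks universality---is the right instinct, but it is the entire content of the missing argument and you have not supplied it. As it stands, your proposal (like the paper's one-line proof) establishes \pspacecompleteness only for the verification subproblem; hardness of the minimization problem itself still requires that direct construction.
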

\begin{proof}
Combine Lemma~\ref{lm:smfm_pspace} and Lemma~\ref{lm:smfm_pspacehard}.
\end{proof}

\subsection{Minimization problems with a unitary alphabet}
Lemma~\ref{lm:smfm_pspacehard} indicates that when the alphabet 
comprises \num{2} or more symbols, both \sfm and \mfm are
\pspacehard. We examine hardness results for unitary alphabet
versions of the problems.

With a unitary alphabet, the tracing-deterministic filter has either a finite chain of
states, or a finite chain with a cycle attached at the end of the chain. 
In such cases, which we write as \unitdfm can be solved efficiently: 
\begin{theorem}[\titleunitdfm is in \p]
Given a tracing-deterministic input filter $\struct{F}$ with $|Y(\struct{F})|=1$
(unitary alphabet $Y=\{y\}$), then finding the minimal tracing-deterministic
filter that output simulates $\struct{F}$ is in \p.
\label{lm:unitary_deterministic}
\end{theorem}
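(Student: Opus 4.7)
The plan is to exploit the structure of tracing-deterministic filters over a unitary alphabet $Y=\{y\}$: each state has at most one outgoing $y$-edge, so from the single initial state of $\struct{F}$ we trace out a chain that either terminates or enters a cycle, i.e., $\struct{F}$ is ``lasso-shaped''. Concretely, $\struct{F}$ is characterized by a tail length $a \geq 0$, a cycle length $b \geq 0$ (with $a + b \geq 1$), and a color set $c(i) \subseteq C$ at each input position $i$. When $b = 0$ the language is the finite $\{y^0, \ldots, y^{a-1}\}$; when $b \geq 1$ the language is $\KleeneStr{y}$ and the sequence $c(0), c(1), \ldots$ is eventually periodic with period $b$ from position $a$. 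Any tracing-deterministic simulator $\struct{F}^\dagger$ is also a lasso, and by the earlier lemma on \vso minimizers we may assume it is \vso, parameterizing it by a tail length $t$, a cycle length $\ell$, and a single color $d_j$ at each state $j \in \{0, \ldots, t+\ell-1\}$.

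Next I would reduce minimization to a family of feasibility tests over these shape parameters. Language inclusion forces $\ell \geq 1$ whenever $b \geq 1$, and forces $t \geq a$ when $\ell = 0$. Output compatibility then requires, for each simulator state $j$,
\[
d_j \in I(t, \ell, j) \;:=\; \bigcap_{i \in \Pi(t, \ell, j)} c(i),
\]
where $\Pi(t, \ell, j) = \{j\}$ if $j < t$ and $\Pi(t, \ell, j) = \{\,i \geq t : i \equiv j - t \pmod{\ell}\,\}$ restricted to the input's defined positions if $j \geq t$. A shape $(t, \ell)$ is realizable exactly when $I(t, \ell, j) \neq \emptyset$ for every $j$, in which case any realization yields a simulator of size $t + \ell$ by picking $d_j$ arbitrarily from $I(t,\ell,j)$.

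The crux is showing each $I(t, \ell, j)$ is polynomial-time computable. For $j < t$ it is just $c(j)$; for $j \geq t$ in the finite-input case, $\Pi(t, \ell, j)$ has at most $\lceil a / \ell \rceil + 1$ elements. In the infinite-input case, once $i \geq \max(a, t)$ both the input (period $b$) and the simulator's return pattern (period $\ell$) are periodic, so the color sets entering the intersection themselves cycle with combined period dividing $\mathrm{lcm}(\ell, b)$; examining the first $\max(a, t) + \mathrm{lcm}(\ell, b) \leq a + \ell b$ positions therefore suffices. Since $a + b = |V(\struct{F})|$ and any minimizer has $t + \ell \leq |V(\struct{F})|$ (because $\struct{F}$ itself output simulates $\struct{F}$), this bound is polynomial in the input size. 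The algorithm then enumerates all candidate shapes $(t, \ell)$ within that range, respecting the language-inclusion constraints, tests each via the intersections above, and returns a minimum-size feasible realization.

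The main obstacle, I expect, is this periodic-intersection argument for the infinite case: one must verify carefully that despite $\Pi(t, \ell, j)$ being infinite, the combined periodicity of the input's color sequence and the simulator's return pattern reduces the feasibility test to inspecting only polynomially many positions. Once that observation is nailed down, the enumeration and construction steps are routine bookkeeping.
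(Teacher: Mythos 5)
Your proposal is correct and follows essentially the same route as the paper: both arguments reduce the problem to enumerating the $O(n^2)$ candidate lasso shapes (tail length, cycle length) bounded by $|V(\struct{F})|$ and checking each candidate for output simulation in polynomial time. Your write-up is more explicit than the paper's in two places the paper glosses over---how each candidate state's single color is chosen (as an element of $\bigcap_{i\in\Pi(t,\ell,j)} c(i)$) and why the feasibility check needs only polynomially many positions via the $\mathrm{lcm}(\ell,b)$ periodicity bound---but these are refinements of the same argument rather than a different approach.
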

\begin{proof}
We prove the hardness by giving a polynomial algorithm. 
First, there is always a tracing-deterministic minimizer that also has a unitary alphabet. 
Given some minimizer that is otherwise, simply remove the
labels that are not in $Y$ and the edges accordingly.
Since the minimizer is tracing-deterministic with a unitary alphabet, it can either be
(1) a finite chain of states (for finite language), or (2) a finite chain with a
cycle (for $\KleeneStr{Y}$). For any type-(1) minimizer, we can add a self-loop
bearing label $y$ at the leaf node, obtaining a type-(2) minimizer, being no larger
while also output simulating the input filter.
Thus, the tracing-deterministic minimizer can be parameterized as a type-(2) filter,
i.e.,a finite chain with $k$ states and a cycle with $m$ states, where $k\in
\Naturals$ and $m\in\PositiveNaturals$. Since the $\struct{F}$ output simulates itself, the minimizer
can be no larger. Let
$n=|V(\struct{F})|$. Then, with $k\in \{0,1,\dots, n\}$ and $m\in
\{1,2,\dots, n\}$ we can enumerate the $O(n^2)$ potential tracing-deterministic
minimizers candidates. For each filter $\struct{F}^{\dagger}$, it takes polynomial time to check
whether $\struct{F}^{\dagger}$ output simulates $\struct{F}$ or not: First,
denote the initial state from  $\struct{F}$ and $\struct{F}^{\dagger}$ as $v_0$
and $v^{\dagger}_0$ respectively. Then check whether $c(v_0)\supseteq
c(v_0^{\dagger})$ or not. If not
output simulation of $\struct{F}^{\dagger}$ is violated.  Otherwise, move on
to check their $y$-children. If the state from $\struct{F}$ has a $y$-child but
the state from $\struct{F}^{\dagger}$ does not, then violation (owing to failure of language inclusion)
has been detected.
Once all $\struct{F}$'s $y$-children have been checked, output simulation is satisfied.
It is, thus, in \p to find the tracing-deterministic minimizer for $\struct{F}$.
\end{proof}

Next, we will show that the tracing-nondeterministic minimizer for a tracing-deterministic
input filter with unitary alphabet is tracing-deterministic, and it can be found in
polynomial time as well: 

\begin{restatable}[\titleunitsfm and \titleunitmfm are in \p]{theorem}{unitnfm}
Given a tracing-deterministic input filter $\struct{F}$ with $|Y(\struct{F})|=1$
(unitary alphabet $Y=\{y\}$), then it is \p to find the minimal 
tracing-nondeterministic filter that output simulates $\struct{F}$.  
\label{lm:unitary_nd}
\end{restatable}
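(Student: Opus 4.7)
The plan is to reduce the question to Theorem~\ref{lm:unitary_deterministic} by showing that, for a tracing-deterministic input over a unitary alphabet, the minimum tracing-nondeterministic output simulator is no smaller than the minimum tracing-deterministic one. Combined with the immediate remark after Definition~\ref{def:sso} (any tracing-deterministic vertex single-output filter is \sso) and the obvious chain of class inclusions, this equalizes the three minima, so the polynomial-time enumerate-and-check procedure of Theorem~\ref{lm:unitary_deterministic} will also solve \unitsfm and \unitmfm.

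The central structural claim I would establish is: for every tracing-nondeterministic filter $\struct{F}^\dagger$ that output simulates $\struct{F}$, there is a tracing-deterministic $\struct{F}^\ddagger$ with $|V(\struct{F}^\ddagger)|\leq|V(\struct{F}^\dagger)|$ that also output simulates $\struct{F}$. Because $\struct{F}$ is tracing-deterministic over $\{y\}$, $\Language{\struct{F}}$ is either $\{y^0,y^1,\dots,y^N\}$ (chain case) or $\KleeneStr{\{y\}}$ (lasso case). By language inclusion, every such string must be traceable in $\struct{F}^\dagger$, so some initial state $v_0^\dagger \in V_0(\struct{F}^\dagger)$ admits a path of length at least $N$ in the chain case, or, by finiteness of the state set and the pigeonhole principle, an infinite path that eventually re-visits a state in the lasso case.

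From a suitable $v_0^\dagger$, construct $\struct{F}^\ddagger$ by following one $y$-successor at each step, choosing at each step a successor that itself retains enough residual extensions to complete the walk. Take the distinct visited states as the vertices of $\struct{F}^\ddagger$, the single selected $y$-edges as its transitions, $v_0^\dagger$ as its unique initial state, and inherit the outputs from $\struct{F}^\dagger$; the result is tracing-deterministic, either a chain or a chain-plus-cycle, and has at most $|V(\struct{F}^\dagger)|$ states. Language inclusion $\Language{\struct{F}}\subseteq\Language{\struct{F}^\ddagger}$ follows from the walk's length, and output compatibility on each $y^n\in\Language{\struct{F}}$ follows because the state $v_n$ reached by $y^n$ in $\struct{F}^\ddagger$ lies in $\reachedf{\struct{F}^\dagger}{y^n}$, so $c(v_n)\subseteq\reachedc{\struct{F}}{y^n}$ by the assumed output simulation of $\struct{F}$ by $\struct{F}^\dagger$.

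The main obstacle is precisely the non-crashing part: because tracing-nondeterminism allows some branches to be dead-ends while others support the required long (or infinite) path, a careless greedy walk could stall prematurely. Overcoming it requires arguing that at each step a viable successor exists in the sub-collection of states whose residual extensions are long enough to complete the construction, and that such viability transfers one step at a time; this is a purely graph-theoretic fact about finite unitary-alphabet filters and can be verified (indeed computed) in polynomial time. Once the claim is in hand, the $\mfm$, $\sfm$, and $\dfm$ optima all coincide, and Theorem~\ref{lm:unitary_deterministic} finishes the proof.
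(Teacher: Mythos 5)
Your proposal is correct and follows essentially the same route as the paper: both reduce to Theorem~\ref{lm:unitary_deterministic} by showing that any tracing-nondeterministic output simulator of a unitary-alphabet tracing-deterministic input can be replaced by a tracing-deterministic one with no more states. The only difference is mechanical---the paper prunes the nondeterministic minimizer down to one cycle and one outgoing edge per state (using the fact that over a unitary alphabet extension sets are totally ordered by inclusion), whereas you extract a single witnessing chain or lasso for the longest required string---and both constructions are sound.
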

\begin{sproof}
One shows (see Appendix for details) that this reduces to the case in
Theorem~\ref{lm:unitary_deterministic}.
\end{sproof}

So far, no difference has manifested in the hardness of finding \sso \emph{vs.}
general tracing-nondeterministic minimizers.  The general cases are both
intractable, and the special unitary cases, both efficient.
The next section tries to probe this difference.

\section{Differences between \sso minimization and general
\label{sec:diff}
tracing-nondeterministic minimization}

In order to better understand whether there is any complexity difference
between \sso and general tracing-nondeterministic minimizers, we next consider
input filters that may also include tracing-nondeterministic ones.  The
previous results apply to both equally because they leverage the first
requirement of output simulation, namely language inclusion.  The second
requirement, output compatibility, relates to the colors generated, so seems
more likely to be where a difference, if any exists, might be pinpointed.
Thus, for this more general class of inputs, we lay open the monolithic
definition of output simulating, examining the component strands, i.e.,
both requirements separately. 
As we show, language inclusions for both \sso and general
tracing-nondeterministic minimization is \pspacehard, but they differ
in the difficulty of checking output compatibility. 
Output compatibility for general tracing-nondeterministic minimizers is 
\pspacehard to check, but it only takes polynomial time to check output
compatibility for \sso tracing-nondeterministic ones.



The naming convention used in the problems before is now extended: 
\gsfmOC indicates both the input and output filter are \sso and potentially
tracing-nondeterministic ({\sc sso}), while
\gmfmOC means both input and output filters are
tracing-nondeterministic ({\sc smo}).

\smallskip
Following the results from the previous section, it is \pspacehard to check
language inclusion for \gsfm or \gmfm: 
\begin{lemma}
Given a \sso tracing-nondeterministic filter $\struct{F}$ and $\struct{F'}$, it
is \pspacehard to check whether $\Language{\struct{F}}\subseteq
\Language{\struct{F'}}$ holds or not.  \label{lm:nd_language_inclusion}
\end{lemma}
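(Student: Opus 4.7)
The plan is to re-use directly the reduction constructed in the proof of Lemma~\ref{lm:smfm_pspacehard}, but to observe that it already produces instances that satisfy the stronger conditions demanded here, so the same polynomial-time many-one reduction from \textsc{NFA-NonUniversality-ASF} goes through verbatim. In that earlier argument, starting from an {\sc ASF} NFA $\struct{A}$ over alphabet $\Sigma$ with $|\Sigma|\geq 2$, one builds a filter $\struct{F}'$ whose underlying graph and transitions mirror those of $\struct{A}$ but in which every state is assigned a single shared color $c_0$, and one builds a filter $\struct{F}$ consisting of a single state with self-loops bearing every symbol in $\Sigma$, also colored $c_0$.

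First I would verify that the two filters so constructed are \sso. The filter $\struct{F}$ has one state, so it is tracing-deterministic, and any tracing-deterministic vertex single-output filter is \sso by the remark following Definition~\ref{def:sso}. The filter $\struct{F}'$ may be tracing-nondeterministic, but since every one of its vertices bears the single color $c_0$, for every $s\in \Language{\struct{F}'}$ the set $\reachedc{\struct{F}'}{s}$ is $\{c_0\}$, hence of cardinality one; so $\struct{F}'$ is \sso as well.

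Next I would transcribe the language identities from the earlier proof: $\Language{\struct{F}}=\KleeneStr{\Sigma}$ by construction, and $\Language{\struct{F}'}=\ALanguage{\struct{A}}$ because $\struct{A}$ is {\sc ASF} so every reachable state is accepting. Consequently $\ALanguage{\struct{A}}=\KleeneStr{\Sigma}$ holds if and only if $\Language{\struct{F}}\subseteq\Language{\struct{F}'}$. Because the construction is polynomial time and \textsc{NFA-NonUniversality-ASF} is \pspacecomplete, this establishes \pspacehardness of the language-inclusion check for two \sso tracing-nondeterministic filters.

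There is no real obstacle to overcome here: the heavy lifting has been done in the earlier reduction, and the only new ingredient is the bookkeeping observation that the mono-chromatic coloring scheme makes both constructed filters automatically \sso. Compared to Lemma~\ref{lm:smfm_pspacehard}, the present lemma is strictly about language inclusion (output compatibility being trivially satisfied by the single-color construction), so it is the language-inclusion strand of output simulation that is isolated and shown hard, which is precisely what is needed to begin separating the two requirements in the analysis of \gsfmOC and \gmfmOC that follows.
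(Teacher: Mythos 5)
Your proposal is correct and follows exactly the paper's own route: the paper proves this lemma by simply citing the reduction from Lemma~\ref{lm:smfm_pspacehard}, and you reuse that same reduction while spelling out the (easy but worthwhile) check that the monochromatic construction makes both filters \sso. No gaps; you have merely made explicit what the paper leaves implicit.
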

\begin{proof}
This has been proved by the reduction presented as Lemma~\ref{lm:smfm_pspacehard}. 
\end{proof}


A difference does show up in checking outputs:
the \sso property means 
output compatibility for \gsfmOC can be checked efficiently:

\begin{lemma}[output compatibility for \titlegsfmOC is in \p]
\label{lem:checkingsso}
Given a tracing-nondeterministic filter $\struct{F}$ and a non-deterministic filter
$\struct{F'}$ such that $\Language{\struct{F}}\subseteq \Language{\struct{F'}}$,
then checking output compatibility: $\forall s\in
\Language{\struct{F}}, \reachedc{\struct{F}}{s}\supseteq
\reachedc{\struct{F'}}{s}$ is in \p.
\label{lm:sso_output_compatibility}
\end{lemma}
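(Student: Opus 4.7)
The plan is to reduce the output compatibility check to a polynomial-time reachability computation on the tensor product. First, I would construct $\struct{G} = \struct{F} \pgproduct \struct{F'}$, which has at most $|V(\struct{F})| \cdot |V(\struct{F'})|$ states and is producible in polynomial time. Then I would compute the set $R$ of states in $\struct{G}$ reachable from $V_0(\struct{F}) \times V_0(\struct{F'})$ via a standard BFS/DFS traversal. Finally, I would sweep through $R$ once and accept output compatibility iff every pair $(v, v') \in R$ satisfies $c(v') \subseteq c(v)$. Each of these three steps is clearly polynomial in the combined size of $\struct{F}$ and $\struct{F'}$.

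For correctness, I would lean on the \sso assumption on $\struct{F}$ inherited from the \gsfmOC context: whenever $s \in \Language{\struct{F}}$ reaches a state $v$, the set $\reachedc{\struct{F}}{s}$ collapses to the singleton $c(v)$. So if some $(v, v') \in R$ contains a color $c' \in c(v') \setminus c(v)$, then any witness string $s$ driving $\struct{F}$ to $v$ and $\struct{F'}$ to $v'$ produces $c'$ in $\struct{F'}$ but, by \sso, not in $\struct{F}$, violating output compatibility. In the other direction, any failure of compatibility on some $s \in \Language{\struct{F}}$ (which lies in $\Language{\struct{F'}}$ by the hypothesis of the lemma) pinpoints a forbidden color $c' \in \reachedc{\struct{F'}}{s} \setminus \reachedc{\struct{F}}{s}$ carried by some $v' \in \reachedf{\struct{F'}}{s}$; pairing that $v'$ with any $v \in \reachedf{\struct{F}}{s}$ exhibits a state in $R$ that fails the per-pair check.

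The main subtlety, and the reason the analogous claim should not extend to \gmfmOC, is the correctness direction that infers a failing pair in $R$ from a failure of output compatibility. Without \sso of $\struct{F}$, a single mismatched pair $(v, v')$ need not correspond to an actual violation, since another state of $\struct{F}$ reached by the same witness string could supply the missing color $c'$, making the simple per-pair test unsound. Deciding whether such a companion state exists reintroduces precisely the synchronized-reachability obstruction that underlies the \pspace-hardness result for the general \gmfmOC case, and the lemma neatly sidesteps it by restricting attention to \sso filters.
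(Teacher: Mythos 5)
Your proposal is correct and follows essentially the same route as the paper: build the tensor product $\struct{F}\pgproduct\struct{F'}$, restrict to reachable pairs, and do a per-pair output check, using the \sso property of $\struct{F}$ (so that $\reachedc{\struct{F}}{s}$ collapses to the single color of any state $s$ reaches) to make the per-pair test sound. Your two-directional correctness argument and the closing remark about why the test fails without \sso match the paper's reasoning and its discussion around Figure~\ref{fig:output_compatibility_hard}.
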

\begin{proof}
We will give a polynomial time procedure to check whether this property holds or
not. First, construct a tensor product graph
$\struct{G}=\struct{F}\pgproduct\struct{F'}$. Next, for every state $v'$ in
$\struct{F'}$, collect the set of states in $\struct{F}$ that are paired with it
in $\struct{G}$, and call it $R_{v'}=\lbrace v\in
V(\struct{F})| (v, v')\in V(\struct{G})\rbrace$. $R_{v'}$ can be constructed in
polynomial time. Then $R_{v'}$ contains the set of all states from $\struct{F}$
that are reached by some string that reaches $v'$ in $\struct{F'}$. If there is
no string from $\struct{F}$ that reaches $v'$ in $\struct{F'}$, i.e.,
$\reaching{\struct{F'}}{v'}\cap \Language{\struct{F}}=\emptyset$, then
$R_{v'}=\emptyset$. Since both $\struct{F}$ and $\struct{F}'$ are \sso, they are
also \vso. For each $v'$, if the output of $v'$ is the same as that of every
state in $R_{v'}$, then it satisfies the output simulation criterion. Otherwise, if there
exists a state $v\in R_{v'}$ such that $c(v)\neq c(v')$, then it violates output
simulation, since $\reaching{\struct{F}}{v}\cap \reaching{\struct{F}'}{v'}$ have output
that is incompatible. Therefore, we only need to check whether the output of
every state paired with $v'$ in $\struct{G}$ is the same as that of $v'$, which
can be done in polynomial time.
\end{proof}

However, without input filter being \sso, checking output compatibility is
\pspacehard:
\begin{lemma}[output compatibility for \titlegmfmOC is \pspacehard]
\label{lem:checkingsmo}
Given a tracing-non\-deter\-ministic input filter $\struct{F}$ and a
tracing-non\-deter\-ministic filter $\struct{F'}$ such that
$\Language{\struct{F}}\subseteq \Language{\struct{F'}}$, then it is \pspacehard
to check the output compatibility:
$\forall s\in \Language{\struct{F}}, \reachedc{\struct{F}}{s}\supseteq
\reachedc{\struct{F'}}{s}$. 
\end{lemma}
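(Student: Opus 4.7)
The plan is to reduce from {\sc NFA-NonUniversality-ASF}, as in Lemma~\ref{lm:smfm_pspacehard}, but with a twist: there, the universality question was encoded entirely in language inclusion (with outputs made trivial by a single color), whereas here the precondition $\Language{\struct{F}} \subseteq \Language{\struct{F'}}$ is a given and must be satisfied for free. So I will build an instance where both filters have language $\KleeneStr{\Sigma}$, but the colors produced by $\struct{F}$ on a string $s$ reflect whether $s$ is accepted by the input ASF NFA $\struct{A}$.

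Given an ASF NFA $\struct{A}$ with alphabet $\Sigma$ ($|\Sigma|\geq 2$), pick two fresh colors $c_0$ and $c_1$ and construct $\struct{F}$ as the disjoint union of two components, both of whose initial states are retained as initial in $\struct{F}$: (i)~a single state with self-loop on every symbol of $\Sigma$, colored $\{c_0\}$; and (ii)~a copy of $\struct{A}$ viewed as a filter by coloring every one of its (all-final) states $\{c_1\}$ (and pruning any unreachable states as allowed by the standing assumption). Construct $\struct{F'}$ as two disjoint initial states, each bearing a $\Sigma$-self-loop, the first colored $\{c_0\}$ and the second colored $\{c_1\}$. Both filters are tracing-nondeterministic by virtue of having multiple initial states, and both have language $\KleeneStr{\Sigma}$, so the precondition $\Language{\struct{F}}\subseteq \Language{\struct{F'}}$ holds tautologically. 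The reduction is clearly polynomial time.

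The key computation is then the following. For every $s \in \KleeneStr{\Sigma}$, we have $\reachedc{\struct{F'}}{s} = \{c_0, c_1\}$, while $\reachedc{\struct{F}}{s} = \{c_0\}$ if $s \notin \ALanguage{\struct{A}}$ and $\{c_0, c_1\}$ if $s \in \ALanguage{\struct{A}}$, because component~(i) always contributes $c_0$ and component~(ii) contributes $c_1$ precisely when $s$ has some trace in $\struct{A}$. Hence output compatibility $\reachedc{\struct{F}}{s} \supseteq \reachedc{\struct{F'}}{s}$ for all $s \in \Language{\struct{F}}$ is equivalent to $c_1 \in \reachedc{\struct{F}}{s}$ for every $s \in \KleeneStr{\Sigma}$, which is equivalent to $\ALanguage{\struct{A}} = \KleeneStr{\Sigma}$. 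So the complement of output compatibility captures {\sc NFA-NonUniversality-ASF}, yielding \pspacehardness.

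I do not expect a serious obstacle: the construction is structurally simple and each verification step is routine. The thing to be careful about is sanity-checking that both $\struct{F}$ and $\struct{F'}$ genuinely fit the class of filters considered by the lemma---tracing-nondeterministic and potentially string multi-output---which is what the two-initial-state patterns buy us. It is worth pointing out that this reduction essentially relies on $\struct{F}$ being free to be \smo, and thereby sharpens the contrast with Lemma~\ref{lem:checkingsso}: under the \sso precondition on the input, the ``missing color'' $c_1$ cannot be injected into $\struct{F}$ the way it is here, which is precisely why output compatibility becomes tractable in that setting.
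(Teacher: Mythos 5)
Your proof is correct, but it takes a different route from the paper's. The paper does not go back to {\sc NFA-NonUniversality-ASF}; it reduces from language inclusion between two tracing-nondeterministic filters (its Lemma~\ref{lm:nd_language_inclusion}): given $\struct{A}$ and $\struct{B}$, it forms the product $\struct{J}=\struct{A}\pgproduct\struct{B}$, takes $\struct{F}$ to be the union of $\struct{J}$ (colored green) with $\struct{A}$ (colored red), and takes $\struct{F'}$ to be $\struct{A}$ colored all green; then $\reachedc{\struct{F}}{s}\supseteq\{\text{green}\}$ for every $s\in\Language{\struct{A}}$ exactly when $\Language{\struct{A}}\subseteq\Language{\struct{B}}$. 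Your construction instead encodes ASF-NFA universality directly: a one-state $\Sigma$-loop colored $c_0$ disjointly unioned with $\struct{A}$ colored $c_1$, against an $\struct{F'}$ that demands both colors on every string. Both arguments rest on the same mechanism --- a ``witness color'' that appears in $\reachedc{\struct{F}}{s}$ precisely when $s$ lies in a \pspacehard-to-decide language, with the precondition $\Language{\struct{F}}\subseteq\Language{\struct{F'}}$ arranged to hold trivially --- and both are sound. What yours buys is economy: you skip the intermediate lemma and the quadratic product construction, and the role of the injected color is arguably more transparent. What the paper's buys is modularity and narrative fit: by reducing from language inclusion itself, it directly supports the surrounding discussion that for \gmfmOC the ``price'' of language inclusion is paid a second time inside output compatibility, and it keeps $\struct{F'}$ equal to one of the originally given filters. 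Your closing observation --- that the argument hinges on $\struct{F}$ being \smo, which is exactly what Lemma~\ref{lem:checkingsso} forbids --- is accurate and matches the paper's own framing of where the complexity gap between \sso and \smo inputs comes from.
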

\begin{proof}
Via reduction from language inclusion, which is shown to be \pspacehard in
Lemma~\ref{lm:nd_language_inclusion}. Given non-deterministic filters $\struct{A}$
and $\struct{B}$ such that $\Language{\struct{A}}\subseteq
\Language{\struct{B}}$, to check the property of output compatibility, we first
construct a product graph between $\struct{A}$ and $\struct{B}$, denoted as
$\struct{J}=\struct{A}\pgproduct\struct{B}$. Then we construct a graph union
with $\struct{J}$ and $\struct{A}$, calling it $\struct{G}$, consisting of
vertices and edges from both $\struct{J}$ and $\struct{A}$. Hence, we have
$\Language{\struct{G}}= \Language{\struct{A}}$. Next, treat $\struct{G}$
as a filter $\struct{F}$ by coloring the vertices from $\struct{J}$ green,
and vertices from $\struct{A}$ red. Treat $\struct{A}$ as a filter
$\struct{F'}$ with all states green. Then, we have
$\Language{\struct{F'}}=\Language{\struct{A}}=\Language{\struct{G}}=\Language{\struct{F}}$.
If $\Language{\struct{A}}\subseteq \Language{\struct{B}}$, then for every string
$s\in\Language{\struct{A}}$, $s$ will reach a red state (from $\struct{A}$) and
a green state (from $\struct{J}$) in $\struct{F}$. Hence
$\reachedc{F}{s}=\lbrace {\rm red}, {\rm green}\rbrace$. Since $\forall s\in
\Language{F},$ $\reachedc{F'}{s}=\lbrace {\rm green}\rbrace$, we have that
$\struct{F}$ and $\struct{F'}$ are output compatible. If
$\Language{\struct{A}}\not\subseteq \Language{\struct{B}}$, then there exists
a string $s$ such that $s\in
\Language{\struct{A}}$ but $s\not \in \Language{\struct{B}}$. Hence, $s$ will
only reach red states (from $\struct{A}$) in $\struct{F}$. Since $s$ reaches 
green states in $\struct{F'}$, then it violates the output compatibility property.
Therefore, the problem of checking language inclusion was reduced  in
polynomial time to the problem of checking output compatibility. So the problem
is \pspacehard as well. 
\end{proof}


\begin{figure}[h]
\vspace*{-1pt}
\begin{minipage}{0.7\textwidth}
\centering
\vspace*{-4ex}
\hspace*{-1ex}
\includegraphics[width=\textwidth]{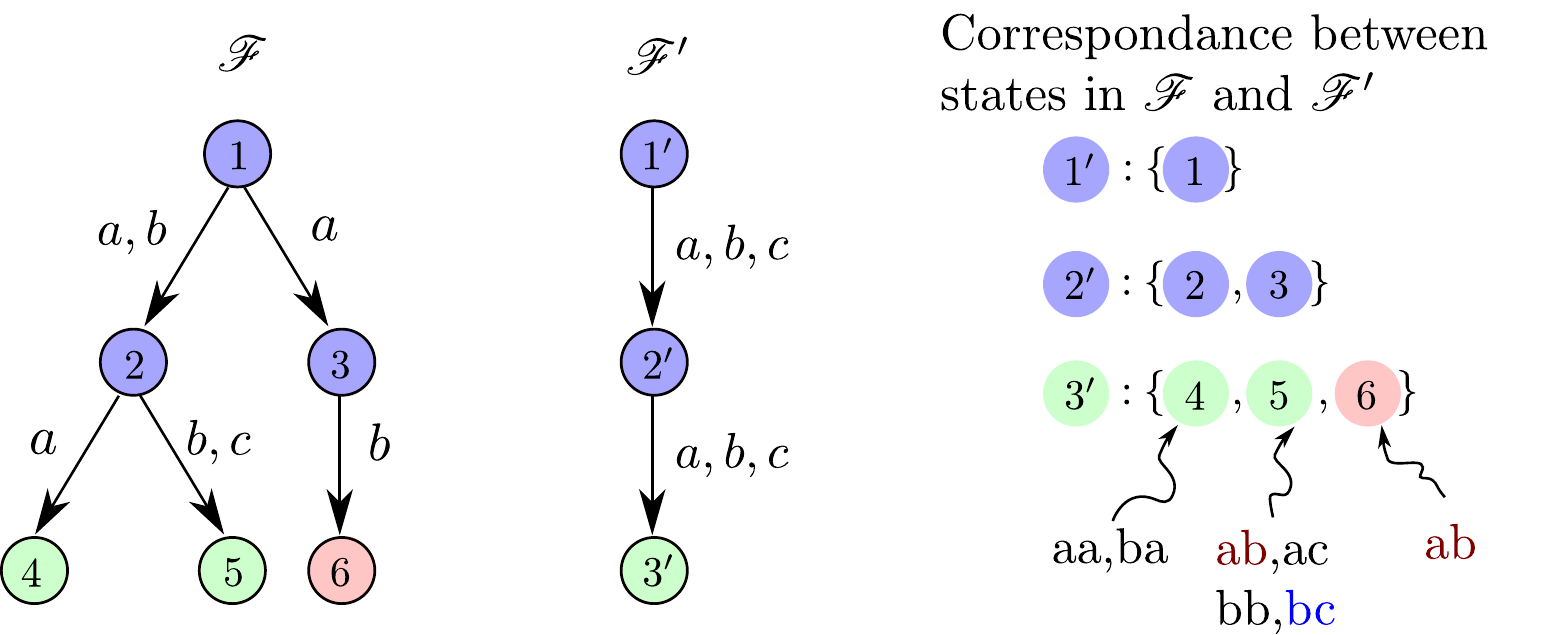}
\end{minipage}
\hfill
\begin{minipage}{0.275\textwidth}
\caption{An illustration that develops intuition for the hardness of checking output compatibility for \gmfmOC.\label{fig:output_compatibility_hard}}
\end{minipage}
\vspace*{-4ex}
\end{figure}

The difference in hardness for checking output compatibility between \gsfmOC and \gmfmOC
can be seen intuitively. As illustrated via Figure~\ref{fig:output_compatibility_hard}: a
tracing-nondeterministic input filter $\struct{F}$ is shown at the left, and an
output filter $\struct{F'}$ is shown in the middle. For every state $v'$ in
$\struct{F'}$, we have collected the set of states $R_{v'}$ in $\struct{F}$ that are
reached by the strings reaching $v'$, exactly as in the procedure in
Lemma~\ref{lm:sso_output_compatibility}. For every $v'$, its corresponding set
$R_{v'}$ is shown at the right of
Figure~\ref{fig:output_compatibility_hard}. As illustrated in
Lemma~\ref{lm:sso_output_compatibility}, if $\struct{F}$ is \sso, then we only
need to check the output of every individual state inside each $R_{v'}$ for
$v'$. We assert a violation of output compatibility if there is a state whose
output is different from that of $v'$. But if $\struct{F}$ is not \sso, we
cannot immediately claim that a failure was found when an output inconsistency is found between
state $v\in R_{v'}$ and $v'$ because the strings $\reaching{\struct{F}}{v}\cap
\reaching{\struct{F}'}{v'}$ may non-deterministically reach other states that
share the same output as $v'$. An example is shown in
Figure~\ref{fig:output_compatibility_hard}, for vertex $3'$, $R_{3'}=\{4, 5,
6\}$. Vertex $6$ outputs pink, while $3'$ outputs green. So there is an
inconsistency. But the strings `$ab$' reaching $6$ and $3'$, also reach a 
green state nondeterministically. In this case,
$\reachedc{\struct{F'}}{ab}\subseteq \reachedc{\struct{F}}{ab}$ still holds.
Therefore, when $\struct{F}$ is not \sso, one must check the output across all
the states that are nondeterministically reached in $\struct{F}$, which is
\pspacehard.

Speaking informally, given that checking language inclusion is at least as hard
as either of the output compatibility checks, one expects to pay at least that
price for determining output simulation. Lemma~\ref{lem:checkingsso} suggests
that price must be paid twice for \gmfmOC, in the sense that there are two
instances of this problem embedded in corroborating output simulation.  While
for \gsfmOC the language inclusion check seems to dominate.  We note that the
conditions on the last two lemmas may have some relevance in applications, for
instance, when one has domain knowledge (or an oracle) that tells you language
inclusion holds.

\vspace*{-8pt}
\section{Summary and Conclusion}
\vspace*{-1ex}

This paper explores a new type of nondeterminism that
is novel in the context of combinatorial filtering.  
It has
attractive properties when used, for instance, in encoding feedback
plans/policies concisely: First, the outputs that such filters produce are consistent,
isolating specific tracing choices (or rewinding operations) made during
processing from being manifested externally. As mappings, they exhibit
deterministic behavior. Secondly, they provide degrees of freedom 
absent from deterministic filters, which facilitate greater compression. It is
curious that this should actually be possible, but our example
demonstrates a clear separation in sizes between the classes.
To initiate study of this class, the \sso filters, we have examined
hardness of size minimization, establishing that the general problems are of
the same complexity class as classical nondeterminism. 
\Sso filter minimization is \pspacehard in terms of the language inclusion for
output simulation, and not output compatibility; general nondeterministic
filter minimization, it turns out, is \pspacehard in terms of both the properties\,---\,a fact never before realized.


\vspace*{-1ex}

%
%
%
{\small
\bibliographystyle{splncs04}
\bibliography{mybib}

\begin{thebibliography}{10}
\providecommand{\url}[1]{\texttt{#1}}
\providecommand{\urlprefix}{URL }
\providecommand{\doi}[1]{https://doi.org/#1}

\bibitem{connell1990minimalist}
Connell, J.H.: Minimalist mobile robotics: a colony-style architecture for an
  artificial creature, Perspectives in AI, vol.~5. Academic Press, Inc. (1990)

\bibitem{kao2009nfas}
Kao, J.Y., Rampersad, N., Shallit, J.: {On NFAs where all states are final,
  initial, or both}. Theoretical Computer Science  \textbf{410}(47-49),
  5010--5021 (2009)

\bibitem{o2017concise}
{O'Kane}, J.M., {Shell}, D.A.: Concise planning and filtering: hardness and
  algorithms. IEEE Trans. on Automation Science and Engineering
  \textbf{14}(4),  1666--1681 (Oct 2017)

\bibitem{rabin59finite}
Rabin, M.O., Scott, D.: {Finite Automata and Their Decision Problems}. IBM
  Journal of Research and Development  \textbf{3}(2),  114--125 (1959)

\bibitem{rahmani2018relationship}
Rahmani, H., O'Kane, J.M.: On the relationship between bisimulation and
  combinatorial filter reduction. In: Proceedings of IEEE International
  Conference on Robotics and Automation. pp. 7314--7321. Brisbane, Australia
  (2018)

\bibitem{rahmani2020integer}
Rahmani, H., O'Kane, J.M.: Integer linear programming formulations of the
  filter partitioning minimization problem. Journal of Combinatorial
  Optimization  \textbf{40},  431--453 (2020)

\bibitem{setlabelrss}
Saberifar, F.Z., Ghasemlou, S., O'Kane, J.M., Shell, D.A.: {Set-labelled
  filters and sensor transformations}. In: Robotics: Science and Systems. Ann
  Arbor, MI (2016)

\bibitem{saberifar2017combinatorial}
Saberifar, F.Z., Mohades, A., Razzazi, M., O'Kane, J.M.: Combinatorial filter
  reduction: Special cases, approximation, and fixed-parameter tractability.
  Journal of Computer and System Sciences  \textbf{85},  74--92 (2017)

\bibitem{saberifar17inconsequential}
Saberifar, F.Z., O'Kane, J.M., Shell, D.A.: {Inconsequential improprieties:
  Filter reduction in probabilistic worlds}. In: Proceedings of IEEE/RSJ
  International Conference on Intelligent Robots and System. Vancouver, Canada
  (2017)

\bibitem{taylor88sensor}
Taylor, R.H., Mason, M.T., Goldberg, K.Y.: {Sensor-based manipulation planning
  as a game with nature}. In: International Symposium on Robotics Research
  (1988)

\bibitem{tovar2014combinatorial}
Tovar, B., Cohen, F., Bobadilla, L., Czarnowski, J., Lavalle, S.M.:
  {Combinatorial filters: Sensor beams, obstacles, and possible paths}. ACM
  Transactions on Sensor Networks  \textbf{10}(3),  1--32 (2014)

\bibitem{zhang19accelerating}
Zhang, Y., Rahmani, H., Shell, D.A., O’Kane, J.M.: Accelerating combinatorial
  filter reduction through constraints. In: Proceedings of IEEE International
  Conference on Robotics and Automation. pp. 9703--9709 (May 2021)

\bibitem{zhang20cover}
Zhang, Y., Shell, D.A.: Cover combinatorial filters and their minimization
  problem. In: Algorithmic Foundations of Robotics XIV. pp. 90--106. Springer
  (2021)

\bibitem{zhang2021nondeterminism}
Zhang, Y., Shell, D.A.: On nondeterminism in combinatorial filters. In:
  Proceedings of IEEE International Conference on Robotics and Automation. (To
  appear). Philadelphia, PA, USA (2022), available at
  \url{https://arxiv.org/abs/2107.07111}

\end{thebibliography}
}
\end{document}